\documentclass{article}

\usepackage{microtype}
\usepackage{graphicx}
\usepackage{subfigure}
\usepackage{booktabs}

\usepackage{hyperref}

\usepackage[accepted]{icml2020}

\usepackage{multirow}
\usepackage{amssymb}
\usepackage{pifont}
\usepackage{adjustbox}
\usepackage{bm}

\usepackage[normalem]{ ulem }
\usepackage{soul}

\usepackage{arydshln}
\newcommand{\cmark}{\ding{51}}
\newcommand{\xmark}{\ding{55}}

\usepackage{stmaryrd}

\newcommand{\ie}{{i}.{e}.}
\newcommand{\eg}{{e}.{g}.}

\definecolor{red}{rgb}{1,0,0}

\usepackage{amsmath,amssymb}
\usepackage{bbm}
\usepackage{dsfont}
\usepackage{amsthm}
\newtheorem{theorem}{Theorem}

\newtheorem{corollary}{Corollary}[theorem]

\definecolor{blue}{rgb}{0,0,1}

\definecolor{lightgray}{rgb}{.83,.83,.83}
\definecolor{gray}{rgb}{.75,.75,.75}

\usepackage[small, skip=10pt]{caption}
\usepackage{caption} 
\captionsetup[table]{skip=1pt}

\icmltitlerunning{Continuous Domain Adaptation with Variational Domain-Agnostic Feature Replay}

\begin{document}

\twocolumn[
\icmltitle{Continuous Domain Adaptation with \\ Variational Domain-Agnostic Feature Replay}

\icmlsetsymbol{equal}{*}

\begin{icmlauthorlist}
\icmlauthor{Qicheng Lao}{imagia,mila}
\icmlauthor{Xiang Jiang}{imagia,dal}
\icmlauthor{Mohammad Havaei}{imagia}
\icmlauthor{Yoshua Bengio}{mila,cifar}
\end{icmlauthorlist}

\icmlaffiliation{imagia}{Imagia}
\icmlaffiliation{mila}{Montreal Institute for Learning Algorithms (MILA), University of Montreal}
\icmlaffiliation{cifar}{Canadian Institute for Advanced Research (CIFAR)}
\icmlaffiliation{dal}{Dalhousie University}
\icmlcorrespondingauthor{Qicheng Lao}{qicheng.lao@gmail.com}
\icmlcorrespondingauthor{Mohammad Havaei}{mohammad@imagia.com}

\icmlkeywords{Unsupervised Domain Adaptation, Non-stationary Environments, Continual Learning, Variational Inference, Replay}

\vskip 0.3in
]

\printAffiliationsAndNotice{}

\begin{abstract}
Learning in non-stationary environments is one of the biggest challenges in machine learning. Non-stationarity can be caused by either task drift, \ie, the drift in the conditional distribution of labels given the input data, or the domain drift, \ie, the drift in the marginal distribution of the input data. This paper aims to tackle this challenge in the context of continuous domain adaptation, where the model is required to learn new tasks adapted to new domains in a non-stationary environment while maintaining previously learned knowledge. To deal with both drifts, we propose variational domain-agnostic feature replay, an approach that is composed of three components: an \emph{inference module} that filters the input data into domain-agnostic representations, a \emph{generative module} that facilitates knowledge transfer, and a \emph{solver module} that applies the filtered and transferable knowledge to solve the queries. We address the two fundamental scenarios in continuous domain adaptation, demonstrating the effectiveness of our proposed approach for practical usage.
\end{abstract}

\section{Introduction} \label{sec_intro}
One of the biggest challenges in machine learning is to learn in non-stationary environments, in which the underlying data distribution (\ie, the joint distribution of the input data and labels $P(X,Y)$) changes over time, also referred to as \textit{concept drift} in previous literature~\cite{schlimmer1986incremental,widmer1996learning}. One source of the change can be from the drift in the conditional distribution of labels given the input data (\ie, $P(Y|X)$), often resulting from the change in the task definition, where the predictive function from the input space to label space may vary. Therefore, this drift in $P(Y|X)$ can be named as \textit{task drift}. Another source of the distribution change is the drift in the marginal distribution of the input data (\ie, $P(X)$), which we name as \textit{domain drift} here, with one additional assumption that $P(Y|X)$ remains the same, in lieu of the aforementioned task drift. The domain drift problem has been identified in many practical scenarios that tackle stream data, often with different terminologies, \eg, virtual concept drift~\cite{widmer1993effective,tsymbal2004problem}, feature change~\cite{gao2007general}, and many others summarized in the survey papers~\cite{gama2014survey,ditzler2015learning}.

Current research in continual learning~\cite{kirkpatrick2017overcoming,zenke2017continual,rebuffi2017icarl,shin2017continual,nguyen2017variational,schwarz2018progress} assumes a single non-stationary data stream, without considering the drifts between the training data and test data. In real-world applications, however, we normally have two streams of data arriving simultaneously, \ie, a training or \textit{support} stream and a test or \textit{query} stream, where both task drift and domain drift can be present across streams. For example, self-taught learning~\cite{raina2007self} can be viewed as a one-step adaptation of the task drift between the support data and query data, while most unsupervised domain adaptation algorithms~\cite{long2015learning,ganin2016domain,tzeng2017adversarial,hoffman2017cycada,saito2018maximum,pmlr-v97-zhang19i} resolve a single step of the domain drift without considering the stream data. There are works, however, that aim to tackle the domain drift for steam data, \ie, a stream of continuously evolving domains, but their proposed methodologies still lack the consideration of task drift in the stream~\cite{hoffman2014continuous,wulfmeier2018incremental}.

Aiming to tackle both drifts in non-stationary environments, this paper studies the problem of Continuous Domain Adaptation (ConDA) for real-life AI use cases. In ConDA, we assume data arriving in two streams (support and query), with the possibility of having both task drift and domain drift within and across the two streams. This is a practical scenario for many applications that use cloud services to ingest data. The goal for the AI model in the cloud is to continuously ingest the support data into some form of knowledge, and apply the learned knowledge to the queries that request predictive services. More specifically, the model needs to continuously accumulate knowledge from two perspectives: first, the knowledge should be \textit{filtered} to be domain-agnostic; second, the knowledge should be captured in a \textit{transferable} form that can be revisited at any time as needed, either to avoid catastrophic forgetting~\cite{mccloskey1989} by retaining competence on previously seen environments, or as part of domain-independent generic prior knowledge to help solve the current query of interest. Unlike many previous works in continual learning~\cite{rebuffi2017icarl,lopez2017gradient,shin2017continual} that address catastrophic forgetting by using unfiltered data (real or generated) to simulate a stationary environment, we emphasize on the high-level knowledge transfer for selective remembering. This is in line with the fact that although human brain has a huge amount of capacity, forgetting seems an evolutionarily correct mechanism. What we need is some form of abstract and transferable knowledge, such as textbook or dictionary, rather than data-level raw information.

To achieve the above, we propose a variational domain-agnostic feature replay approach, which is composed of three modules: (1) the \textit{inference module} that transforms the input data into filtered knowledge that is domain-agnostic; (2) the \textit{generative module} as a means to enable knowledge transfer by replaying the learned knowledge, similar to the high-level replay found in animal brains~\cite{skaggs1996replay}; and (3) the \textit{solver module} that applies the filtered and transferable knowledge to solve the queries. Intuitively, the synergy between the inference module and the solver module creates an information bottleneck, where the inference module minimizes the mutual information between the input and the domain-agnostic features while the solver module maximizes the mutual information between the domain-agnostic features and the labels.

We validate the proposed approach on two fundamental scenarios of continuous domain adaptation,~\ie, enforcing non-stationarity on the query stream either in the task space or in the domain space. Our experiments demonstrate the effectiveness of the proposed approach on addressing both task drift and domain drift. We also show the possibility of using generative domain-agnostic feature replay as a means of data augmentation towards better generalization.

\section{Continuous Domain Adaptation}
Let $S$ denote the support stream and $Q$ the query stream, both composed of sequentially arriving datasets, \ie, $S_i = \{ (\bm{x}_{S_i}^{(n)}, y_{S_i}^{(n)}) \}_{n=1, \dots, N_{S_i} }$ for $i \in [1, 2,...,T_S]$, and $Q_j=\{ \bm{x}_{Q_j}^{(n)} \}_{n=1,\dots,N_{Q_j}}$ for $j \in [1, 2,...,T_Q]$ (Figure~\ref{fig:conda}, top), the goal of continuous domain adaptation is to maintain the knowledge on $S$ and transfer to $Q$. Since the data streams can be acquired from any acquisition systems for any tasks, they may be subject to both task drift and domain drift. 

\begin{figure}[!t]
	\includegraphics[width=\columnwidth]{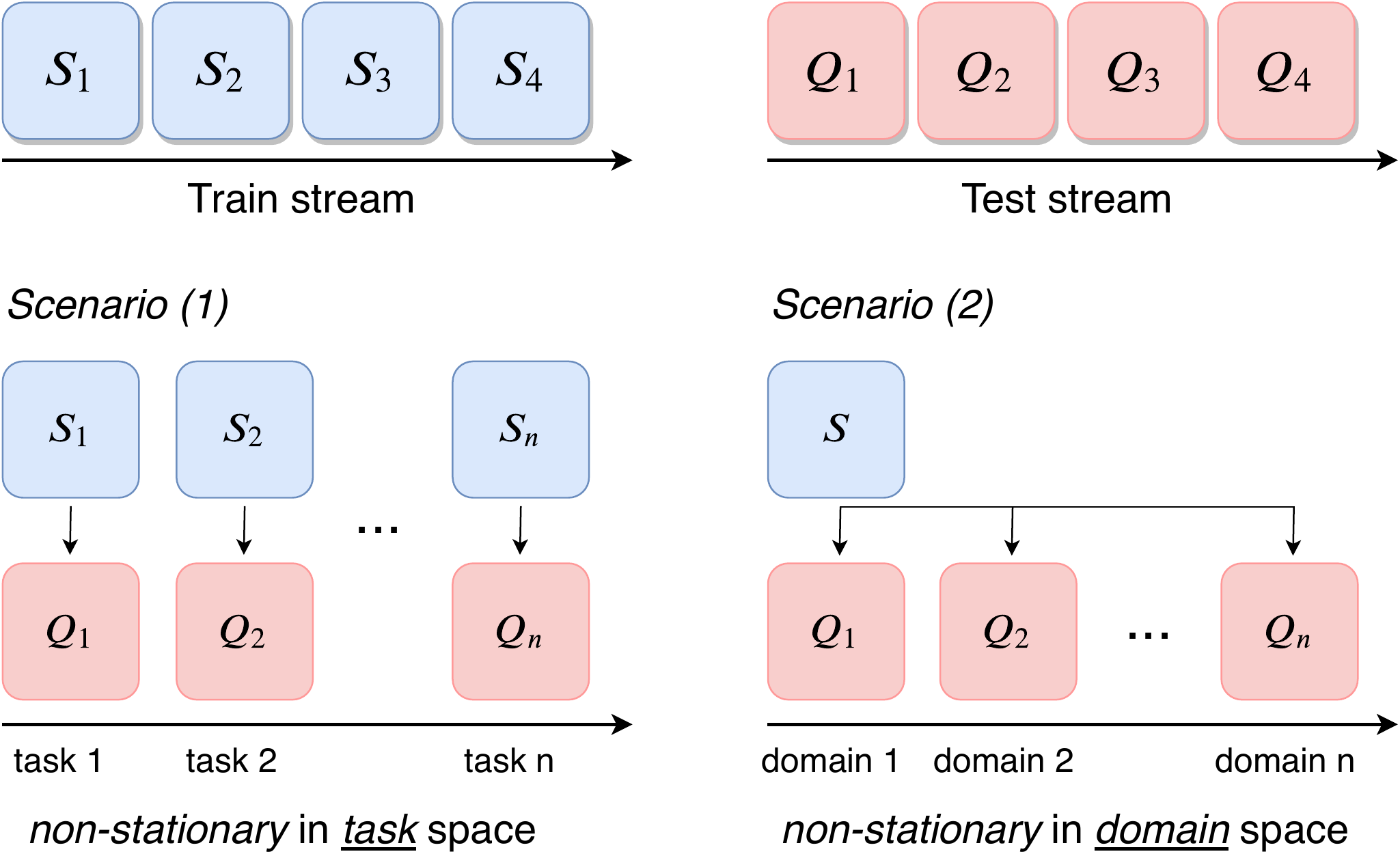}
	\vspace{-15pt}
	\caption{Streams of incoming data in the cloud. $S_i$ represents a training (\ie, support) set while $Q_j$ represents a test (\ie, query) set. We want to maintain our knowledge on $S$ and transfer to $Q$.}
	\label{fig:conda}
	\vspace{-15pt}
\end{figure}

The task drift can be illustrated in Scenario (1) (shown in Figure~\ref{fig:conda}, bottom left) with non-stationarity in the task space, where for the support stream $S$, the predictive mapping from the input data $X$ to class labels $Y$ can be continuously changing, \ie, $P_{S_i}(Y|X) \neq P_{S_j}(Y|X)$ for $i \neq j$, therefore resulting in different tasks, and the same non-stationarity also applies to the query stream $Q$. The challenge here is that the model needs to not only retain knowledge learned from past data for solving previous tasks (or queries), but also address the domain shift between the query and support streams since $P_{Q}(X) \neq P_{S}(X)$. The latter problem is well known as domain adaptation~\footnote{In this work, we focus on unsupervised domain adaptation.}~\cite{pan2009survey,quionero2009dataset}, and in order for the query data to be solvable, a typical assumption is that there exists an unobserved latent variable $X_0$, such that $P_{Q}(Y|X_0) = P_{S}(Y|X_0)$ ~\cite{quionero2009dataset}.

On the other hand, we can also assume non-stationarity in the domain space, where we have continuously arriving data in the query stream, each coming from a different domain (\ie, $P_{Q_i}(X) \neq P_{Q_j}(X)$ while $P_{Q_i}(Y|X) = P_{Q_j}(Y|X)$ for $i \neq j$) as shown in Scenario (2) (Figure~\ref{fig:conda}, bottom right). This scenario also represents a group of common use cases in practice, for example, in the healthcare sector, the sequentially arrived query data from different hospitals may be subject to domain drift due to the acquisition system settings, patient demographics, etc., however, the underlying predictive mechanism should remain unchanged. As a result, the model is required to be continuously updated to bridge new domain shift for the current query while maintaining the ability of solving previously seen queries. Note that, in both Scenario (1) and (2), we assume the usual restriction in continual learning that the data seen in previous environments is hidden, and we only have access to data in the current environment.

The above two scenarios compose the fundamental elements in building up most complex ConDA scenarios. Therefore, in this paper, we consider to solve these two fundamental scenarios. Moreover, in ConDA, we are interested in continuously solving unlabeled queries, whose solvability also depends on whether the corresponding support data is available, therefore, to make sure that the queries are actually solvable, we assume that for each $Q_i$, the corresponding $S_i$ (\ie, $S_i$ shares the same labeling function with $Q_i$) arrives earlier than $Q_i$. In practice, if $S_i$ arrives later, the adaptation of $Q_i$ can be held until $S_i$ is available. 

\begin{figure}[!t]
	\includegraphics[width=\columnwidth]{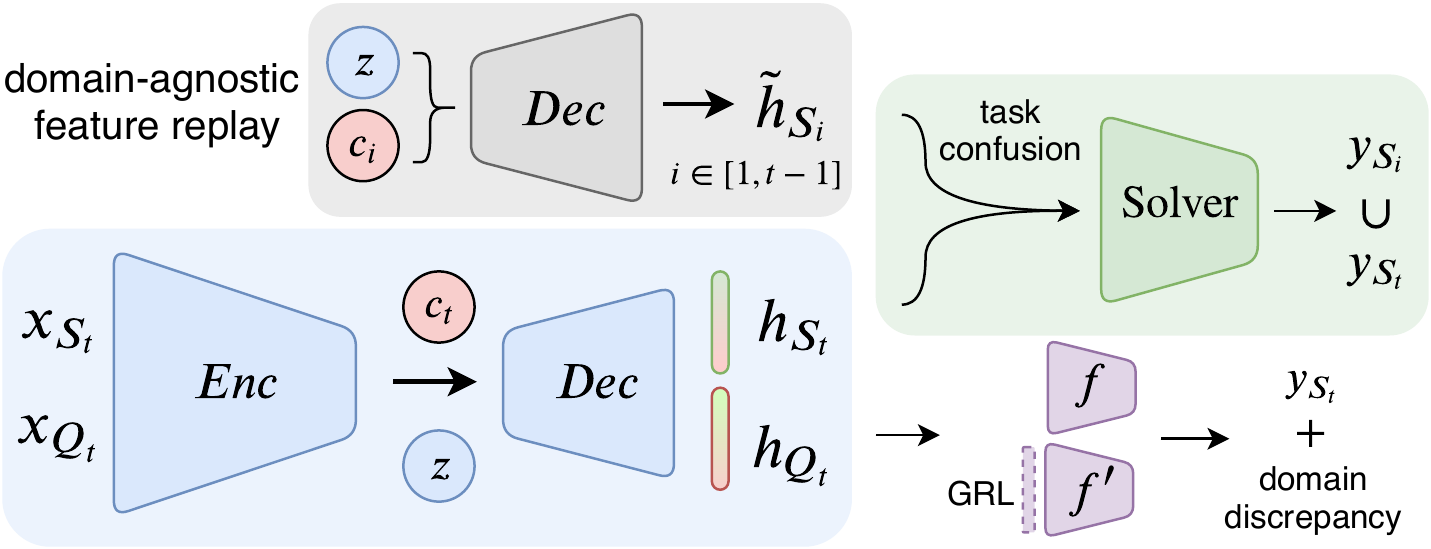}
	\vspace{-15pt}
	\caption{Decoupling continuous domain adaptation into three modules: Inference module (blue), Generative module (grey) and Solver module (green). The inference module infers the domain-agnostic features for the current task through variational inference; the generative module generates domain-agnostic feature samples from previous seen tasks; and the solver module is able to continuously solve all seen tasks.}
	\label{fig:model}
	\vspace{-15pt}
\end{figure}

\section{Approach}
Here, we propose to solve continuous domain adaptation by decoupling the problem into three modules (Figure~\ref{fig:model}): (1) \textit{Inference module}~(Section \ref{sec_inference}), which uses variational inference to train a domain-agnostic feature space for a given domain drift; (2) \textit{Generative module}~(Section \ref{sec_generation}), which allows us to sample from a previously learned domain-agnostic feature space. The sampled features can be used either as our filtered knowledge to reinforce the solver on remembering the knowledge required to solve the previous queries in a non-stationary environment, or as a data augmentation tool to augment the knowledge about the current query; and (3) \textit{Solver module}~(Section \ref{sec_solver}), which focuses only on the downstream task of interest, provided that the inferred or generated data is already domain-agnostic.

This decoupling also separates the concerns of domain drift from task drift in complex non-stationary environments, and brings up the possibility of transferring domain-agnostic knowledge among different environments, which to the best of our knowledge, has not yet been investigated in current domain adaptation research. As a first step, we will show later in our experiments that this is possible with our proposed approach.

\subsection{Variational domain-agnostic feature inference} \label{sec_inference}
In this module, we are given some labeled input data $\bm{x}_{S_t}$ from the support stream $S$, and unlabeled query data $\bm{x}_{Q_t}$ at time step $t$. We aim to map both support and query data into a shared stochastic feature space using a mapping function $\Psi$: $\bm{x} \rightarrow \bm{h}$, such that the following two conditions hold: (1) $\bm{h}$ maximally preserves the necessary information to predict the label $y$, and (2) the domain discrepancy between support and query on $\bm{h}$ is minimum. The first condition respects the current task in spite of task drift, while the second condition deals with domain drift. Note that for simplicity of notation, we ignore the subscripts for the random variables here.

To achieve this, we use variational inference~\cite{hoffman2013stochastic} through first encoding $\bm{x}$ into a latent variable $\bm{z}$, which is then decoded into $\bm{h}$. We also introduce a conditioning factor $c$ in order to enable the conditional generation for the generative module (Section~\ref{sec_generation}) in different environments, similar to~\cite{sohn2015learning}. The two conditions on $\bm{h}$ can be then formulated as:
\begin{equation} \label{eq_conditions}
\begin{split}
& \resizebox{1\linewidth}{!}{$\operatorname*{max}\limits_{\Psi, f} \mathbb{E}_{\bm{z} \sim q(\bm{z}|\bm{x}, c)} \log p(y|\bm{h})p(\bm{h}|\bm{z}, c) - KL(q(\bm{z}|\bm{x}, c) \vert\vert p(\bm{z})),$} \\
& \text{\quad s.t.\quad} d_{\mathcal{H}\Delta\mathcal{H}}(H_{S_t}, H_{Q_t}) \le \lambda_t,
\end{split}
\end{equation}
where $H_{S_t}$ and $H_{Q_t}$ are the marginal feature distributions for the support domain $S_t$ and query domain $Q_t$, respectively, and
\begin{multline}
\hspace{-0.8em} d_{\mathcal{H}\Delta\mathcal{H}}(H_{S_t}, H_{Q_t}) = \\
    \resizebox{1\linewidth}{!}{
    $2\sup\limits_{f,f'\in \mathcal{F}}\vert \mathbb{E}_{\bm{h} \sim H_{Q_t}} [f(\bm{h}) \neq f'(\bm{h})] - \mathbb{E}_{\bm{h} \sim H_{S_t}} [f(\bm{h}) \neq f'(\bm{h})] \vert$
    }\hspace{-1em}
\end{multline}
is the $\mathcal{H}\Delta\mathcal{H}$ divergence that measures domain discrepancy with $f$ and $f'$ denoting two hypotheses in the hypothesis space $\mathcal{F}$~\cite{ben2010theory}.

Solving~(\ref{eq_conditions}) is equivalent to minimizing the following objective function:
\begin{equation} 
    \mathcal{L} = \epsilon + \beta d_{\mathcal{H}\Delta\mathcal{H}},
\end{equation}
where $\epsilon$ denotes the error of satisfying the first condition in~(\ref{eq_conditions}), and $\beta$ is a Lagrange multiplier. Note that although derived from a new perspective, our objective function has a similar form as the upper bound of the target domain error~\footnote{equivalent to query domain error in this paper} in domain adaptation theory~\cite{ben2010theory}. In our case, we focus on constructing the domain-agnostic feature space that can be sampled from later on in the generative replay module~(Section \ref{sec_generation}). Following the works in adversarial domain adaptation~\cite{ganin2015unsupervised,tzeng2017adversarial,hoffman2017cycada,saito2018maximum,long2018conditional,pmlr-v97-zhang19i}, we also minimize the domain disparity discrepancy via a minimax optimization process:
\begin{equation} \label{eq_beta}
\begin{split}
    &\min_{\Psi, f} \quad \epsilon + \beta d_{\mathcal{H}\Delta\mathcal{H}}, \\
    &\max_{f'} \quad d_{\mathcal{H}\Delta\mathcal{H}}.
\end{split}
\end{equation}

\subsection{Generative domain-agnostic feature replay} \label{sec_generation}
Assuming a variational mapping from the input space to the domain-agnostic feature space has been learned, we can then use the decoder function $g$ to conditionally generate domain-agnostic features based on the conditioning factor c for each learned environment:
\begin{equation}
    \Tilde{\bm{h}} = g(\bm{z}, c), ~\bm{z} \sim \mathcal{N}(\bm{0},\bm{I}).
\end{equation}
Since $\Tilde{\bm{h}}$ is domain-agnostic, it represents the knowledge of interest filtered from the support data, and this knowledge can be transferred through the generative replay process, to further guide the training of the solver. Therefore, at each time step $t > 1$, even when the real data seen in previous environments (\ie, $i \in [1, t-1]$) is not available, we can still replay $\Tilde{\bm{h}_i}$ to address the catastrophic forgetting, \ie, to regularize the solver to remember how to solve previous queries. This is similar to generative replay, or pseudo-rehearsal~\cite{robins1995catastrophic}, which has been widely used as an effective approach to addressing catastrophic forgetting in continual learning. However, in most works~\cite{shin2017continual,wu2018memory}, the original input data $\bm{x}$ is replayed. While $p(\bm{x})$ is often difficult to approximate, especially when $\bm{x}$ is in high dimension, our feature replay can be viewed as a means for high-level knowledge transfer, \ie, \textit{filtered knowledge replay}, rather than the data-level replay, and the information filtration is guided by the inference module described above~(Section \ref{sec_inference}).

In addition to addressing catastrophic forgetting, our feature replay can also act as an alternative approach of transferring part of domain-independent generic prior knowledge among different queries. More specifically, the domain-agnostic feature $\Tilde{\bm{h}_i}$ resulting from solving previous query $Q_i$ can be used as augmented data in addition to the inferred ${\bm{h}_t}$ when solving the current query $Q_t$, given the assumption that $Q_t$ shares some similarity with $Q_i$. We discuss more details in Section~\ref{sec_exp_data_aug}.

\subsection{Solver module} \label{sec_solver}
Our solver is a unified model that continuously integrates knowledge filtered from seen data in the support stream, and is designed to solve all the seen queries. Since the solver operates on the domain-agnostic feature space, \ie, there is no domain shift between the support and query data in the feature space, it is therefore able to solve all the unlabeled queries in the query stream, once trained on the support stream. 

At time step $t$, our solver sees both the inferred features $\bm{h}_t$ from input data $\bm{x}_{S_t}$ in the support stream, and the generated features $\Tilde{\bm{h}_i}$ from previously learned snapshot decoder. Let $\bm{\theta}$ be the parameters of the solver, the objective can be given as:
\begin{align}
    & \min_{\bm{\theta}} \{ \mathbb{E}_{\bm{z} \sim q(\bm{z}|\bm{x}_{S_t}, c_t), \bm{h}_t \sim p(\bm{h}|\bm{z}, c_t)} \log p(y_{S_t}|\bm{h}_t) \nonumber \\
    & \quad\quad + \sum_{i=1}^{t-1} \mathbb{E}_{\bm{z} \sim p(\bm{z}), \bm{h}_i \sim p^*(\bm{h}|\bm{z}, c_i)} \log p(y_{S_i}|\bm{h}_i) \},
\end{align}
where $p^*$ denotes the snapshot decoders learned from previously seen environments.

Note that our solver is independent from the hypothesis classifier $f$ in the inference module. Although $f$ also aims to predict the class label $y_{S_t}$ given a domain-agnostic feature $\bm{h}_t$, it can not replace the role of the solver in solving previous queries, even with the replay of $\Tilde{\bm{h}_i}$. We will show later in our experiment (Figure~\ref{fig:acc_curves} (c)) that the feature replay of $\Tilde{\bm{h}_i}$ without the solver module can interfere with the adversarial learning when minimizing the domain disparity discrepancy for the current adaptation, thus resulting in impaired performance. Therefore, our solver module is indispensable as a way to confuse knowledge learned from all seen environments. We train the three modules end-to-end for continuous domain adaptation.

\subsection{Theoretical analysis} \label{sec_theorem}
Here, we analyze the theoretical guarantee for continuous domain adaptation.
\begin{theorem} \label{therorem_1}
Let $\lambda_i$ be the domain discrepancy of the marginal feature distributions $H_{S_i}$ and $H_{Q_i}$ measured by the $\mathcal{H}\Delta\mathcal{H}$ divergence, \ie, $\lambda_i = d_{\mathcal{H}\Delta\mathcal{H}}(H_{S_i}, H_{Q_i})$, and $P_{r}^{(i)}$, $P_{g}^{(i)}$ respectively denote the conditional distributions of labels $y_{S_i}$ given the real features $H_{S_i}$ and generated features $\Tilde{H}_{S_i}$, the total error of the query stream $\varepsilon_Q$ at time step $t$ is bounded by:
\begin{equation}
\begin{split}
    \varepsilon_Q \le \sum_{i=1}^t (\varepsilon_{S_i} + \lambda_i) + \sum_{i=1}^{t-1} KL(P_g^{(i)} \vert\vert P_r^{(i)}) + C^*,
\end{split}
\end{equation}
where $C^* = \min\limits_{\bm{\theta}} \sum_{i=1}^t (\varepsilon_{S_i} + \varepsilon_{Q_i})$ is the error of an optimal solver for both the support and query streams.
\end{theorem}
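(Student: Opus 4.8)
The plan is to prove the bound environment-by-environment and then aggregate over the $t$ time steps. First I would split the total query error additively, $\varepsilon_Q = \sum_{i=1}^{t} \varepsilon_{Q_i}$, so that it suffices to control each per-environment query error $\varepsilon_{Q_i}$ separately. The key enabling observation is that the solver operates entirely on the domain-agnostic feature space $\bm{h}$; hence for each fixed $i$ the pair of marginal feature distributions $(H_{S_i}, H_{Q_i})$ plays exactly the role of a source/target pair, and I would invoke the standard target-error bound from domain adaptation theory to obtain $\varepsilon_{Q_i} \le \varepsilon_{S_i} + \tfrac{1}{2}\lambda_i + \lambda_i^{*}$, where $\lambda_i = d_{\mathcal{H}\Delta\mathcal{H}}(H_{S_i}, H_{Q_i})$ is handed to us by the hypothesis and $\lambda_i^{*} = \min_{\bm{\theta}}\bigl(\varepsilon_{S_i}(\bm{\theta}) + \varepsilon_{Q_i}(\bm{\theta})\bigr)$ is the ideal-joint-hypothesis error on environment $i$. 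Since $\tfrac{1}{2}\lambda_i \le \lambda_i$, replacing the former by $\lambda_i$ keeps the inequality valid and matches the stated form.

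Next I would treat the current environment $i=t$ and the past environments $i<t$ differently, because the solver is trained on the genuinely inferred features $\bm{h}_t$ at the current step but only on the replayed samples $\tilde{\bm{h}}_i$ for past steps. For $i=t$ the term $\varepsilon_{S_t}$ is the true real-feature error and needs no correction. For $i<t$ the risk the solver actually minimizes is taken under the generated conditional $P_g^{(i)}$, whereas $\varepsilon_{S_i}$ is the risk under the real conditional $P_r^{(i)}$; the discrepancy is exactly the price of generative replay. Fixing the loss to be the (negative log-likelihood) cross-entropy, I would write this excess risk as an expectation of a log-likelihood ratio and recognize it as $KL(P_g^{(i)} \vert\vert P_r^{(i)})$, giving the correction $\sum_{i=1}^{t-1} KL(P_g^{(i)} \vert\vert P_r^{(i)})$, summed only up to $t-1$ precisely because the current environment uses real features and therefore incurs no replay gap.

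Finally I would aggregate and fold the scattered ideal-hypothesis terms into a single constant. Summing the per-environment inequalities yields $\varepsilon_Q \le \sum_{i=1}^{t}(\varepsilon_{S_i}+\lambda_i) + \sum_{i=1}^{t-1} KL(P_g^{(i)} \vert\vert P_r^{(i)}) + \sum_{i=1}^{t}\lambda_i^{*}$, and it remains to bound $\sum_{i=1}^{t}\lambda_i^{*}$ by $C^{*}$. This is the elementary sum-of-minima inequality: $\sum_{i=1}^{t}\min_{\bm{\theta}}\bigl(\varepsilon_{S_i}+\varepsilon_{Q_i}\bigr) \le \min_{\bm{\theta}}\sum_{i=1}^{t}\bigl(\varepsilon_{S_i}+\varepsilon_{Q_i}\bigr) = C^{*}$, because a single shared solver is constrained to be good on every environment at once. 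Substituting completes the stated bound.

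The main obstacle I anticipate is the replay-gap step: rigorously certifying that redirecting the solver's training signal from $P_r^{(i)}$ to $P_g^{(i)}$ inflates the real-feature risk by no more than $KL(P_g^{(i)} \vert\vert P_r^{(i)})$. This forces me to commit to the precise loss so that the excess risk is literally a KL term, and to be careful about the directionality and normalization of the KL argument, since the bound is one-sided and the KL divergence is asymmetric. By comparison, the per-step invocation of the domain adaptation bound and the sum-of-minima collection into $C^{*}$ are routine once the domain-agnostic features are regarded as the common input space of the solver.
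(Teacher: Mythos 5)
Your proposal follows essentially the same route as the paper's proof: a per-environment application of the Ben-David target-error bound on the domain-agnostic feature space, a KL-divergence accounting (under negative log-likelihood loss) for the gap between training on generated versus real features in the past environments $i<t$, and aggregation of the per-environment ideal-hypothesis terms into the single constant $C^*$. The differences are cosmetic — you make explicit the sum-of-minima inequality $\sum_{i=1}^{t}\min_{\bm{\theta}}\bigl(\varepsilon_{S_i}+\varepsilon_{Q_i}\bigr) \le \min_{\bm{\theta}}\sum_{i=1}^{t}\bigl(\varepsilon_{S_i}+\varepsilon_{Q_i}\bigr)$ that the paper invokes silently, and the replay-gap step you flag as delicate (committing to a loss so the excess risk is literally a KL term) is precisely the step the paper itself treats informally.
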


The proof is given in the supplementary material. The query stream error bound has different components, which can also be explained by our three different modules. For example, $\lambda_i$ represents how well our inference module has learned a domain-agnostic feature space; the $KL$ term measures the degree of the generative module approximating the real feature distribution, since we assume the previous real data is without access; and finally, $\varepsilon_{S_i}$ evaluates the solver's performance on the support data, and $C^*$ is the capacity of the solver in finding an optimal solution for both streams.

\begin{figure*}[!t]
	\includegraphics[width=\textwidth]{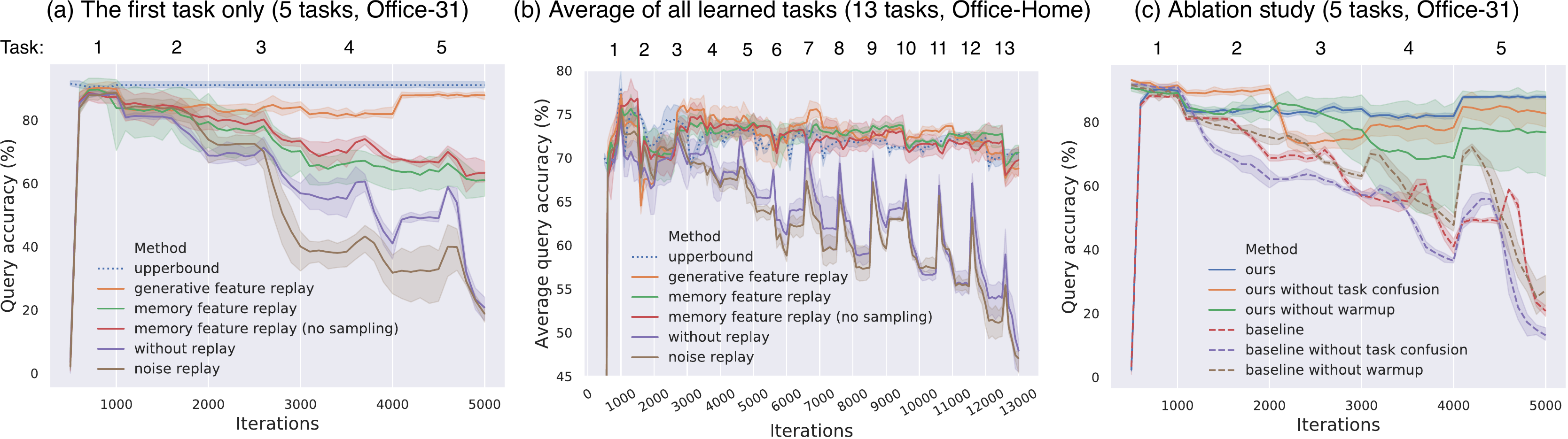}
	\vspace{-15pt}
	\caption{(a) The query accuracy on the first task during sequential training (D$\rightarrow$A, Office-31); (b) The average query accuracy of all learned tasks during sequential training (Ar$\rightarrow$Cl, Office-Home); (c) Ablation study on different components of our proposed approach. Baseline without warmup corresponds to baseline 2 in Table~\ref{tab_ablation}, and baseline without task confusion corresponds to baseline 3 in Table~\ref{tab_ablation}. }
	\label{fig:acc_curves}
	\vspace{-10pt}
\end{figure*}

\begin{figure}[!t]
	\includegraphics[width=\columnwidth]{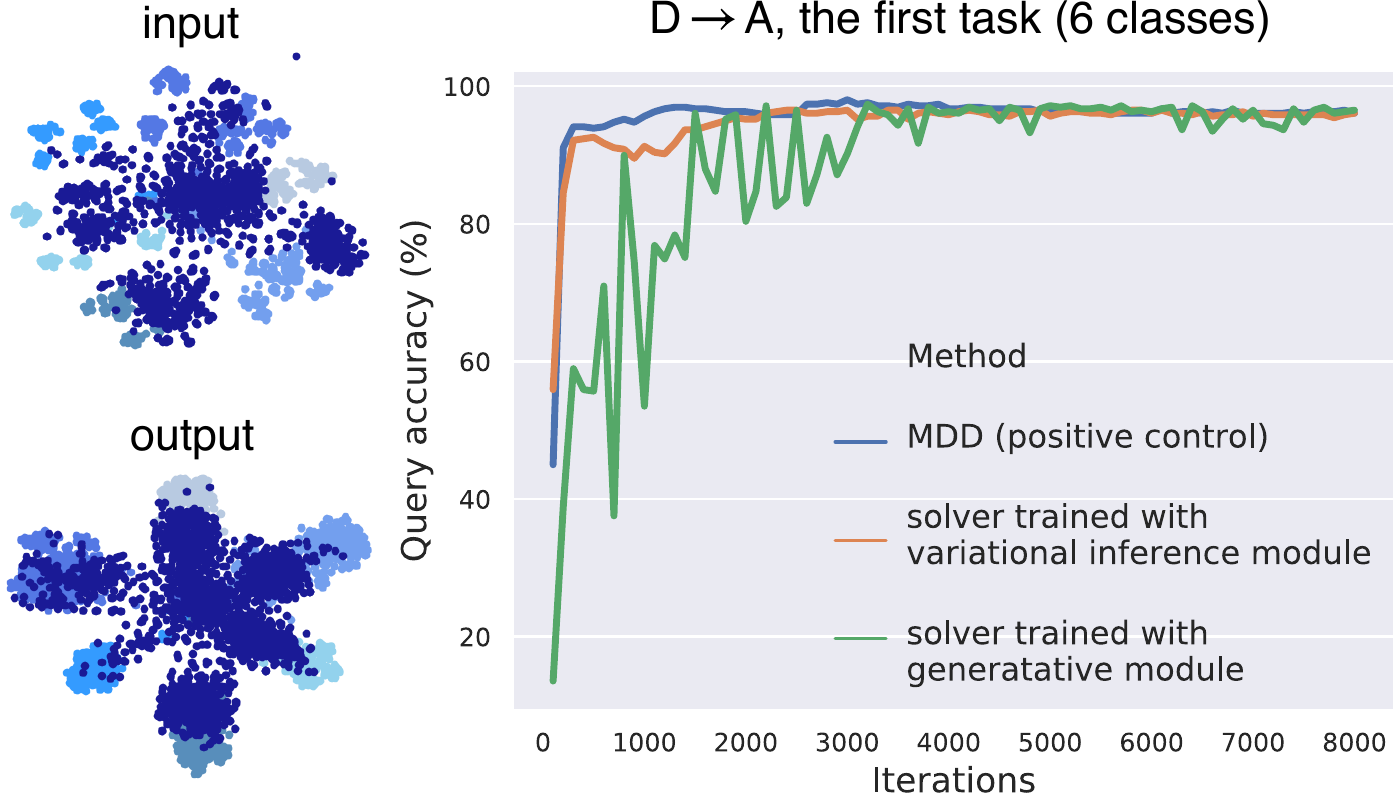}
	\caption{Comparison of features representing the support data (in \textit{light} colors) and query data (in \textit{dark} colors), before and after our inference module (left); The performance of the solver trained with generated features is comparable to that of the solver trained with real features (right).}
	\label{fig:feature_eval}
	\vspace{-10pt}
\end{figure}

\section{Experiments} \label{sec_exp}
We validate our proposed approach for continuous domain adaptation on two benchmark datasets. For scenario (1), we split the datasets into different tasks based on the class label to simulate the task drift in a non-stationary environment, and we consider one domain as the support stream with labels and another domain as the query stream without labels; For scenario (2), we choose one domain as the support stream and consider the remaining domains in the dataset as sequentially arriving queries in the query stream, so that the domain drift is present both within and across the two streams. We use margin disparity discrepancy (MDD) in our inference module for minimizing the domain disparity discrepancy, and also follow the same architecture choices as in~\cite{pmlr-v97-zhang19i}. More implementation details are provided in the supplementary material.

\paragraph{Dataset} Office-31~\cite{saenko2010adapting} has three domains: Amazon (A), DSLR (D) and Webcam (W), which in total contains 31 classes and 4,652 images. We split the dataset into 5 tasks, with 6 classes in the first four tasks and 7 classes in the last task (split details in Table~\ref{supp_tab_office31_split}, supplementary material). Office-Home~\cite{venkateswara2017deep} is a more challenging dataset that contains 65 classes and 15,500 images in four distinct domains: Artistic images (Ar), Clip art (Cl), Product images (Pr) and Real-World images (Rw). Similarly, we split the dataset into 13 tasks, each with 5 classes (split details in Table~\ref{supp_tab_officehome_split}, supplementary material).

\addtolength{\tabcolsep}{-3pt}  
\begin{table}[!t]
    \centering
    \caption{Components of our proposed approach.} 
    \begin{adjustbox}{width=\columnwidth}
    \begin{tabular}{@{}l *{6}{c}@{}}
        \toprule
        \multirow{2}{*}{Method} & \multicolumn{3}{c}{Replay} & \multirow{2}{*}{Task confusion} & \multirow{2}{*}{Warmup} & \multirow{2}{*}{Snapshot} \\
        \cmidrule(lr){2-4}
         & G & M & N & & & \\
        \midrule
        GFR* (Ours) & \cmark & & & \cmark & \cmark & \cmark \\
        Memory replay & & \cmark & & \cmark & \cmark & \cmark \\
        Noise replay & & & \cmark & \cmark & \cmark & \cmark \\
        Baseline 1 (Optimal) & \multicolumn{3}{c}{---} & \cmark & \cmark & \cmark \\
        Baseline 2 & \multicolumn{3}{c}{---} & \cmark & \xmark & \cmark \\
        Baseline 3 & \multicolumn{3}{c}{---} & \xmark & \xmark & \cmark \\
        Baseline 4 (Naive) & \multicolumn{3}{c}{---} & \xmark & \xmark & \xmark \\        
        \bottomrule
        \multicolumn{5}{l}{\footnotesize * short for generative feature replay} \\
      \end{tabular}
    \end{adjustbox}
    \label{tab_ablation}
    \vspace{-10pt}
\end{table}
\addtolength{\tabcolsep}{3pt}

\begin{figure*}[!t]
	\includegraphics[width=\textwidth]{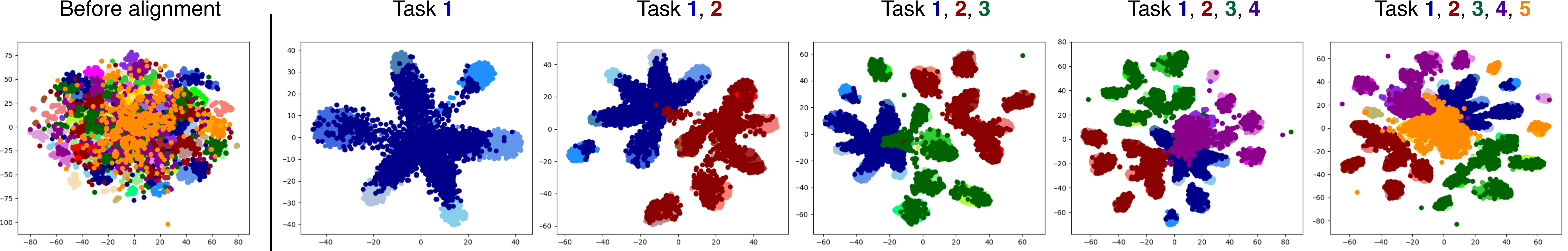}
	\vspace{-15pt}
	\caption{t-SNE visualizations of domain-agnostic features from both the support data (in \textit{light} colors) and query data (in \textit{dark} colors). The features are being continuously aligned given sequentially arriving tasks on Office-31.}
	\label{fig:tsne}
	\vspace{-5pt}
\end{figure*}

\addtolength{\tabcolsep}{-3pt}  
\begin{table*}[htbp]
    \centering
    \caption{Average query accuracy ($\%$) of all learned tasks on Office-Home.}
    \begin{adjustbox}{width=\textwidth}
      \begin{tabular}{c*{13}{c}}
        \toprule
        Method & Ar$\shortrightarrow$Cl & Ar$\shortrightarrow$Pr & Ar$\shortrightarrow$Rw & Cl$\shortrightarrow$Ar & Cl$\shortrightarrow$Pr & Cl$\shortrightarrow$Rw & Pr$\shortrightarrow$Ar & Pr$\shortrightarrow$Cl & Pr$\shortrightarrow$Rw & Rw$\shortrightarrow$Ar & Rw$\shortrightarrow$Cl & Rw$\shortrightarrow$Pr & \fcolorbox{gray}{lightgray}{Avg} \\
        \midrule
        joint training* & 54.9 & 73.7 & 77.8 & 60.0 & 71.4 & 71.8 & 61.2 & 53.6 & 78.1 & 72.5 & 60.2 & 82.3 & \fcolorbox{gray}{lightgray}{68.1} \\
        \midrule
        upper bound (task split) & 69.7 & 91.1 & 92.4 & 76.8 & 88.6 & 88.5 & 82.1 & 73.8 & 94.2 & 86.6 & 74.6 & 95.1 & \fcolorbox{gray}{lightgray}{84.5} \\
        \hdashline
        naive baseline$\dagger$ & 4.6 & 7.2 & 6.8 & 5.2 & 7.4 & 6.9 & 6.2 & 6.5 & 7.1 & 7.4 & 6.2 & 7.0 & \fcolorbox{gray}{lightgray}{\hspace{2.5pt}6.5\hspace{2.5pt}} \\
        optimal baseline$\dagger$ & 49.3 & 68.5 & 79.1 & 49.4 & 64.9 & 67.7 & 66.0 & 49.7 & 78.8 & 70.0 & 54.2 & 79.9 & \fcolorbox{gray}{lightgray}{64.8} \\
        ours & \textbf{69.0} & \textbf{87.4} & \textbf{88.3} & \textbf{71.6} & \textbf{87.2} & \textbf{87.2} & \textbf{76.2} & \textbf{70.0} & \textbf{92.2} & \textbf{84.5} & \textbf{72.7} & \textbf{93.8} & \fcolorbox{gray}{lightgray}{\textbf{81.7}} \\
        \bottomrule
        \multicolumn{5}{l}{\footnotesize * results cited from MDD~\cite{pmlr-v97-zhang19i}; $\dagger$ defined in Table~\ref{tab_ablation}.} \\
      \end{tabular}
    \end{adjustbox}
    \label{tab_average_home}
    \vspace{-12pt}
\end{table*}
\addtolength{\tabcolsep}{3pt}  

\subsection{Domain-agnostic feature evaluation}
We first evaluate the features from two perspectives: (1) whether the features can be domain-agnostic representations of filtered knowledge, and (2) whether the generated features can be a functional replacement of the real data for knowledge transfer, \ie, whether we can potentially use the proposed generative feature replay to facilitate the solver in remembering previously learned knowledge. To address the second question, we train a solver on the generated features, and evaluate it on the real features.

As shown in Figure~\ref{fig:feature_eval} (left), the output features through the inference module are aligned between the support data (in \textit{light} colors) and query data (in \textit{dark} colors), as compared to the features directly extracted from a ResNet~\cite{he2016deep} model pretrained on ImageNet~\cite{russakovsky2015imagenet}, indicating that the features are indeed domain-agnostic. In addition, we also demonstrate in Figure~\ref{fig:feature_eval} (right) that, the solver trained with generated features can predict the class labels as well as the solver trained with real features, although the convergence is slower with generated features. This suggests that feature replay is effective in approximating the real features for the downstream solver. It is also worth mentioning that the introduction of variational inference module does not impair the domain adaptation performance, with respect to MDD~\cite{pmlr-v97-zhang19i} as the positive control (Figure~\ref{fig:feature_eval}, right).

\subsection{Non-stationarity in tasks} \label{sec_task_drift}
Having shown the effectiveness of both inference and generative modules in Figure~\ref{fig:feature_eval}, we now use the proposed variational domain-agnostic feature replay to address Scenario (1), where we assume task drift in both streams and domain drift across streams (Figure~\ref{fig:conda}, bottom left). For the solver to be able to continuously solve the non-stationary queries, we replay the generated domain-agnostic features learned from previous tasks while learning the current task. This allows the solver to operate on both previous tasks and the current task simultaneously and thus function as a \textit{task confuser}, \ie, removes task boundaries. 

As shown in Figure~\ref{fig:acc_curves} (a), the generative feature replay helps the solver remember the first task as training progresses across tasks, whereas the solver suffers from catastrophic forgetting without replay (more results on other domains are shown in Figure~\ref{fig:task1_alldomains}, supplementary material). Similarly, the average query accuracy on all learned tasks can also be improved by the replay process (Figure~\ref{fig:acc_curves} (b)). Surprisingly, we also find that in some cases (\eg, Office-31 dataset shown in Figure~\ref{fig:acc_curves} (a)), the generative feature replay works better than the memory feature replay, where we store the features from real data in memory. One of the possible explanations could be that the generative feature distribution has learned the missing data points and acts as a regularizer in the feature space, which helps with overfitting especially when the training data examples are few (\eg, Office-31 dataset). This is also evidenced by the comparable performance between generative feature replay and memory feature replay on Office-Home dataset (Figure~\ref{fig:acc_curves} (b)), where more data examples are available. As a negative control, we also experiment with noise replay, where the features are replaced with random noises, and as expected, the solver suffers from catastrophic forgetting (Figure~\ref{fig:acc_curves} (a) and (b)). 

\vspace{-10pt}
\paragraph{Ablation study} We analyze the different model components and strategies used for our approach in Table~\ref{tab_ablation}, \eg, the solver module for the task confusion, the snapshot for the generative feature replay module, and the \textit{warmup} strategy. The warmup strategy is designed to first train the inference module independently for a few iterations, before integrating it with the training of the solver module in an end-to-end fashion. Figure~\ref{fig:acc_curves} (c) shows the results of the ablation study on both our approach and the baseline. It is shown in the figure that both the task confusion component and warmup strategy improve the performance of our approach while all baselines suffer severely from forgetting. 
\begin{table*}[htbp]
    \centering
    \caption{Average query accuracy ($\%$) of all learned domains on Office-Home.}
    \begin{adjustbox}{width=\textwidth}
      \begin{tabular}{c*{13}{c}}
        \toprule
        \multirow{3}{*}{Method} & & \multicolumn{5}{c}{Ascending order} & \multicolumn{5}{c}{Descending order} \\
        \cmidrule(lr){3-7} \cmidrule(lr){8-12}
        & Support stream: & Ar & Cl & Pr & Rw & \multirow{2}{*}{\fcolorbox{gray}{lightgray}{Avg}} & Ar & Cl & Pr & Rw & \multirow{2}{*}{\fcolorbox{gray}{lightgray}{Avg}} \\
        \cmidrule(lr){3-6} \cmidrule(lr){8-11}
        & Query stream: & Rw,Pr,Cl & Rw,Pr,Ar & Rw,Ar,Cl & Pr,Ar,Cl &  & Cl,Pr,Rw & Ar,Pr,Rw & Cl,Ar,Rw & Cl,Ar,Pr &  \\
        \midrule
        
        \multicolumn{2}{c}{upper bound (domain split)} & 65.45 & 61.98 & 60.04 & 67.18 & \fcolorbox{gray}{lightgray}{63.66} & 64.88 & 60.04 & 58.24 & 67.46 & \fcolorbox{gray}{lightgray}{62.66} \\
        \hdashline
        \multicolumn{2}{c}{without replay} & 18.41 & 18.86 & 16.90 & 20.31 & \fcolorbox{gray}{lightgray}{18.62} & 25.84 & 21.31 & 25.21 & 28.42 & \fcolorbox{gray}{lightgray}{25.20} \\
        \multicolumn{2}{c}{without encoder snapshot} & 56.12 & 56.69 & 54.07 & 57.91 & \fcolorbox{gray}{lightgray}{56.20} & \textbf{62.31} & 57.74 & \textbf{56.92} & 62.58 & \fcolorbox{gray}{lightgray}{\textbf{59.89}} \\ 
        \multicolumn{2}{c}{ours} & \textbf{62.00} & \textbf{59.17} & \textbf{57.09} & \textbf{62.63} & \fcolorbox{gray}{lightgray}{\textbf{60.22}} & 60.00 & \textbf{57.99} & {54.95} & \textbf{65.05} & \fcolorbox{gray}{lightgray}{59.50}  \\
        \bottomrule
      \end{tabular}
    \end{adjustbox}
    \label{tab_domains}
    \vspace{-10pt}
\end{table*}

Figure~\ref{fig:tsne} shows the t-SNE visualizations of features from both the support data (in \textit{light} colors) and query data (in \textit{dark} colors) at each task step, where the class space is gradually expanding. The class-wise alignment between the support stream and query stream guarantees the solver's performance on the query stream, since the solver is only trained with the support data in our approach. We summarize the performance comparisons of the average query accuracy between our approach and multiple baselines in Table~\ref{tab_average_home} (Office-Home) and Table~\ref{tab_average_31} (Office-31, supplementary material). The superior performance of our proposed approach demonstrates its effectiveness in addressing both task drift and domain drift that are present in Scenario (1).

\subsection{Non-stationarity in domains} \label{sec_domain_drift}
In this section, we address Scenario (2), in which we assume a single domain in the support stream, and sequentially arriving queries from different domains in the query stream as shown in Figure~\ref{fig:conda} (bottom right). As such, the domain drift exists both within and across streams. We perform experiments on Office-Home dataset by selecting one domain as the support data and the remaining three domains as queries in the query stream ordered by the adaptation difficulty level~\footnote{based on query accuracy marginalized on the support data}, either ascending (\ie, Rw, Pr, Ar, Cl) or descending (\ie, Cl, Ar, Pr, Rw). For example, if domain Ar is chosen as the support data, the adaptation of the query stream can be written as Ar$\rightarrow$Rw, Pr, Cl in ascending order, and Ar$\rightarrow$Cl, Pr, Rw in descending order. We evaluate the effectiveness of generative feature replay in the worse case scenario of catastrophic forgetting, where the solver deteriorates into a complete forgetting. To simulate this, we train the solver from scratch for each query domain, and investigate the effect of generative domain-agnostic feature replay on the overall performance of all seen domains. 

Figure~\ref{fig:acc_domains} shows the curves of the average query accuracy of all learned domains, and it demonstrates that with all listed permutations, generative feature replay facilitates the solver to generalize to all previously seen domains when the data for previous query domains is without access. This again shows the ability of generative feature replay in a de novo transfer of high-level knowledge to the solver without relying on the example-level experiences. Table~\ref{tab_domains} compares the average query accuracy of our approach to that of different baselines, where we can see the dramatic improvements with replay. We also find that the knowledge transfer can be further facilitated by keeping snapshot of the encoder from the inference module, especially when the query stream is in an ascending order (\ie, from easy to hard), suggesting that easy queries are more vulnerable to forgetting.

\begin{figure}[!t]
	\includegraphics[width=\columnwidth]{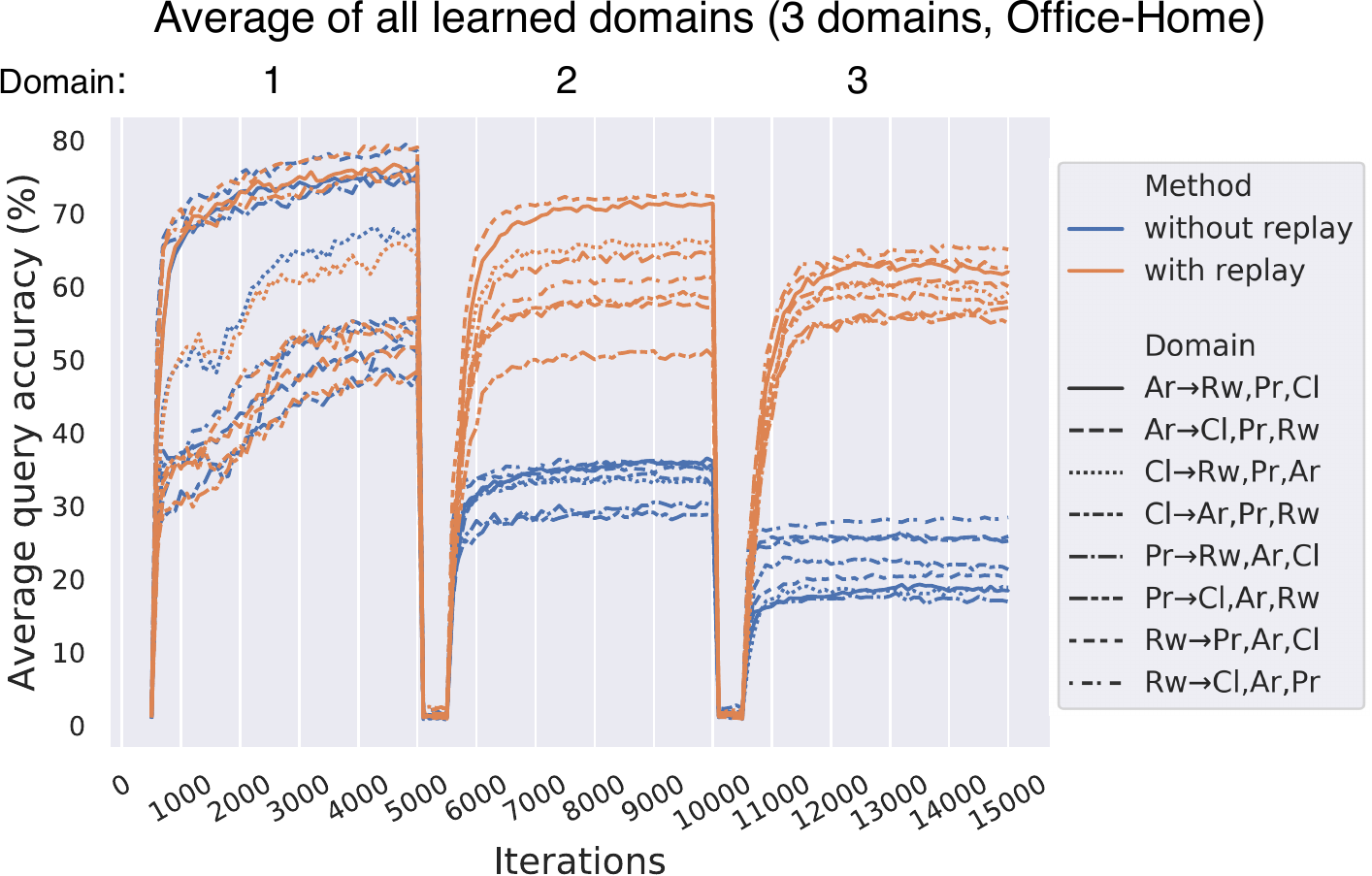}
	\caption{The average query accuracy of all learned domains during sequential training (Office-Home).}
	\label{fig:acc_domains}
	\vspace{-10pt}
\end{figure}

\subsection{Generative feature replay for data augmentation} \label{sec_exp_data_aug}
Note that in Scenario (2), as training progresses in the query stream, the solver module eventually captures generalized features that are agnostic to all seen domains. This is analogous to learning class-specific features that lay in the intersection of different domains as shown in Figure~\ref{fig:domains_aug} (a). In our proposed approach, we learn domain-agnostic features ${\bm{h}_i}$ between the support domain $S$ and the query domain $Q_i$ at time step $i$, \eg, the intersection of support domain and query 1 in Figure~\ref{fig:domains_aug} (a) (left). The generative feature replay module learned at time step $i$ can be used when solving a subsequent query domain $Q_t$ ($t > i$). However, whether replaying the generated features $\Tilde{\bm{h}_i}$ can be beneficial for solving $Q_t$ depends on the similarity between $Q_i$ and $Q_t$. For example, as illustrated in Figure~\ref{fig:domains_aug} (a), query~1 shares more similarity with query 2 than query 3, therefore the knowledge learned from solving query~1 would generally be more transferable to query 2.

\begin{figure}[!t]
	\includegraphics[width=\columnwidth]{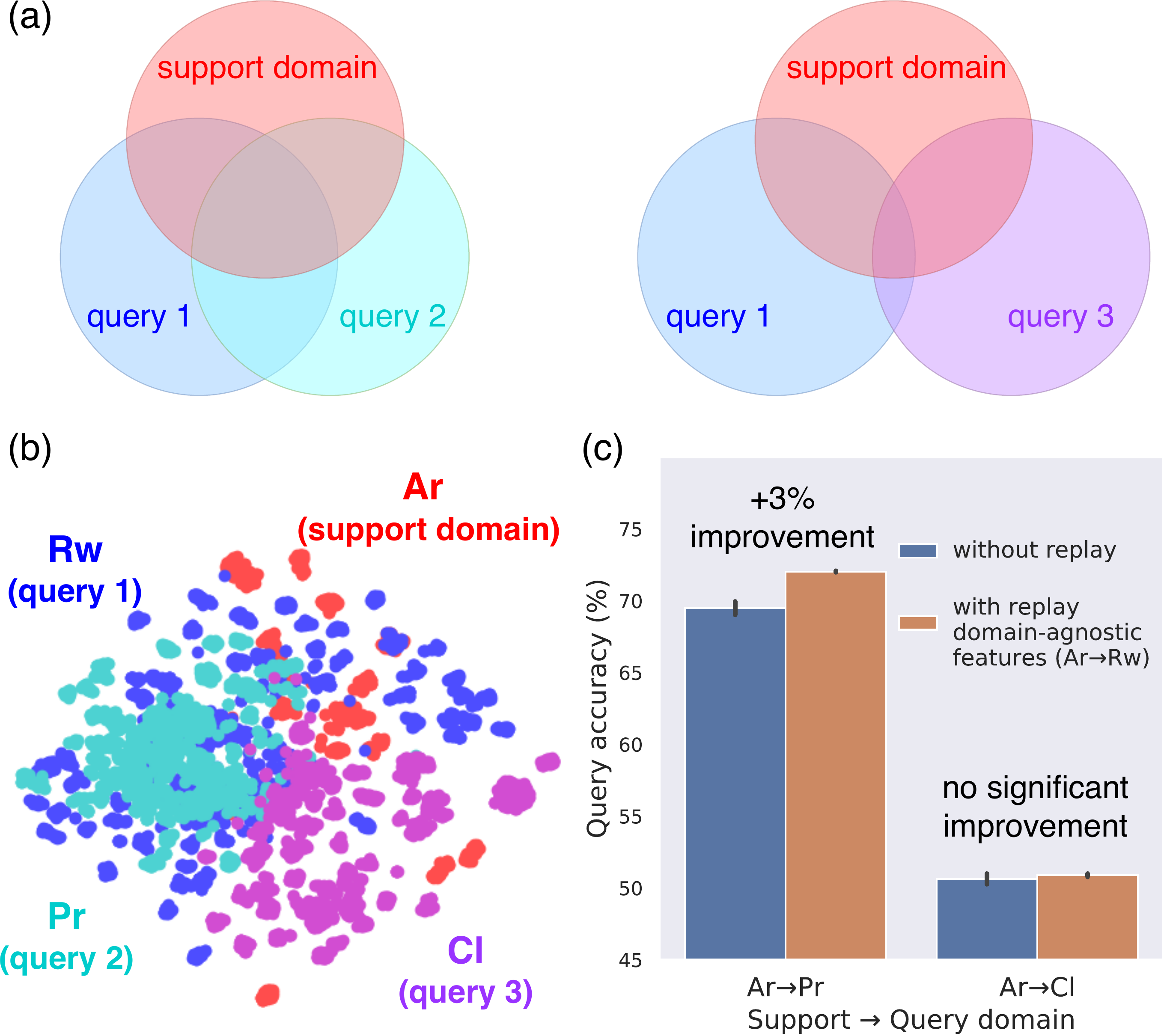}
	\vspace{-15pt}
	\caption{(a) Examples of possible relation among domains; (b) t-SNE visualization of features from the four different domains in Office-Home dataset (showing the first class only); (c) Generative replay of the features learned from Ar$\rightarrow$Rw improves Ar$\rightarrow$Pr, but not Ar$\rightarrow$Cl.}
	\label{fig:domains_aug}
	\vspace{-10pt}
\end{figure}

To illustrate this, we first visualize the features of the first class (Table~\ref{supp_tab_officehome_split}, supplementary material) from the four different domains in Office-Home dataset as an estimation of domain relation. The features are extracted using a ResNet-50 model pretrained on ImageNet. As seen from the t-SNE plot in Figure~\ref{fig:domains_aug} (b), domain Rw shares more overlap with domain Pr as compared to domain Cl. Correspondingly, we find in our experiment that, given domain Ar as the support domain, the generative replay of features learned from solving Ar$\rightarrow$Rw improves Ar$\rightarrow$Pr by around 3$\%$ in the query accuracy, while no significant improvement is observed for Ar$\rightarrow$Cl (Figure~\ref{fig:domains_aug} (c)). However, generally speaking, the improvement is found to be a more common phenomenon, for example, Cl$\rightarrow$Rw improves Cl$\rightarrow$Pr by 1.35$\%$, Ar$\rightarrow$Cl improves Ar$\rightarrow$Pr by 3.45$\%$, and Pr$\rightarrow$Cl improves Pr$\rightarrow$Ar by 1.89$\%$ (results are shown in Figure \ref{fig:domains_aug_suppl}, supplementary material). Given the observed improvements, it is possible that our generative feature replay, by providing more augmented feature samples that are domain-agnostic, imposes an additional regularization to constrain the solver in capturing more generalized features. However, this is based on the assumption that the solver has a fixed amount of capacity.

\section{Related Work}
\paragraph{Continuous domain adaptation}
The problem of continuous domain adaptation has been studied before but in different contexts with different emphases. For example, \cite{mancini2019adagraph} attempt to solve a specific scenario in continuous domain adaptation, where no target data is available, but with metadata provided for all domains; \cite{gong2019dlow} propose to bridge two domains by generating a continuous flow of intermediate domains between the two original domains; ~\cite{hoffman2014continuous,wulfmeier2018incremental} present continuous domain adaptation with the emphasis to generalize on a transitioning target domain. Closely related to our Scenario (2) is the recent work of~\cite{bobu2018adapting}, where they also aim to address catastrophic forgetting, but with an implicit assumption that the domain drift follows a specific pattern, \ie, induced by gradually changing weather or lighting condition, which is a reasonable assumption in applications such as autonomous driving. We focus on more general use cases for solving any arriving queries in the cloud without imposing extra constraints on the relationship among the queries.

\paragraph{Variational information bottleneck}
Our work is also related to variational information bottleneck~\cite{alemi2016deep}, in the sense that we address the domain drift across streams via a variational inference that can be viewed as maximizing the mutual information between the domain-agnostic features and labels, while minimizing the mutual information between the input data and domain-agnostic features. A concurrent work~\cite{song2019improving} adopts the idea of variational information bottleneck for domain adaptation, where the one-step domain adaptation performance is shown to be improved. Similarly, \cite{luo2019significance} show the integration of information bottleneck improves domain adaptive segmentation task. In our approach, we constrain the bottleneck on the decoded feature rather than directly on the latent code, and require no additional regularization on the query (target) data as in~\cite{song2019improving}.

Variational autoencoder~\cite{kingma2013auto} has also been extensively exploited in domain adaptation to learn disentangled representations for better adaptation performance, where different types of latent variables are proposed to better capture the variations in the dataset (\eg, domain-relevant and class-relevant information), and the reconstruction is either on the image level~\cite{ilse2019diva,cai2019learning} or feature level~\cite{peng2019domain}. In our variational inference module, the domain-agnostic features are learned through supervision from labels instead of reconstruction.

\vspace{-2pt}
\paragraph{Replay in continual learning}
Replay has been widely used as an effective approach to addressing catastrophic forgetting in the continual learning research, such as example replay~\cite{rebuffi2017icarl}, deep generative replay~\cite{shin2017continual, wu2018memory}, and experience replay~\cite{rolnick2019experience}. However, in these approaches, the generative process is unfiltered and operates on the data level, while our generative process is domain-agnostic and operates on the abstract feature level.  A concurrent work~\cite{pellegrini2019latent} that uses latent replay is closely related to our feature replay, both emphasizing on the high-level knowledge transfer, however, their latent replay stands for the replay of the activation volumes in some of the intermediate layers without stochasticity.

\section{Conclusion}
In this paper, we tackle the challenge of learning in non-stationary environments in the context of continuous domain adaptation, where we have two streams of data in the cloud (\ie, support and query steam) that can be subject to both task drift and domain drift, within and across streams. We present two fundamental scenarios for continuous domain adaptation with the presence of across-stream domain drift, by assuming either task drift or domain drift in both streams. To address both drifts, we propose a variational domain-agnostic feature replay approach, which allows the model in the cloud to continuously accumulate the filtered and transferable knowledge for solving all queries. We demonstrate the effectiveness of the proposed approach on the two fundamental scenarios in continuous domain adaptation.

\bibliography{paper}
\bibliographystyle{icml2020}

\clearpage
\section{Supplementary Material}
\subsection{Proof of Theorem~\ref{therorem_1})}
In this subsection, we give proof of Theorem~\ref{therorem_1} presented in Section~\ref{sec_theorem}, which analyzes the theoretical guarantee for continuous domain adaptation.

\begin{theorem}~\cite{ben2010theory}
Given a source domain and a target domain, with the input data distributions $X_{\text{source}}$ and $X_{\text{target}}$, we have the target domain error bounded by:
\begin{equation}
    \varepsilon_{\text{target}} \le \varepsilon_{\text{source}} + d_{\mathcal{H}\Delta\mathcal{H}}(X_{\text{source}}, X_{\text{target}}) + C^*,
\end{equation}
where $d_{\mathcal{H}\Delta\mathcal{H}}(X_{\text{source}}, X_{\text{target}})$ is the $\mathcal{H}\Delta\mathcal{H}$ divergence that measures domain discrepancy between the source and target domain, and $C^*$ is the error of an optimal classifier for both the source and target domains.
\end{theorem}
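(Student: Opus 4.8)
The plan is to reproduce the classical Ben-David \eal{} argument, whose engine is a triangle inequality for the disagreement between hypotheses. First I would fix the hypothesis class $\mathcal{H}$ and, for any input distribution $D$ and any two hypotheses $h, h' \in \mathcal{H}$, introduce the disagreement $\varepsilon_D(h, h') = \mathbb{E}_{x \sim D}\,[\,|h(x) - h'(x)|\,]$, so that the domain error of a fixed classifier $h$ is the special case $\varepsilon_D(h) = \varepsilon_D(h, f_D)$ against the true labeling function $f_D$. The central object is the \emph{ideal joint hypothesis} $h^{*} = \operatorname*{arg\,min}_{h \in \mathcal{H}} \bigl( \varepsilon_{\text{source}}(h) + \varepsilon_{\text{target}}(h) \bigr)$, whose combined error $\varepsilon_{\text{source}}(h^{*}) + \varepsilon_{\text{target}}(h^{*})$ is exactly the constant $C^{*}$ appearing in the statement.

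Second, I would record the triangle inequality $\varepsilon_D(h_1, h_3) \le \varepsilon_D(h_1, h_2) + \varepsilon_D(h_2, h_3)$, which follows pointwise from $|h_1 - h_3| \le |h_1 - h_2| + |h_2 - h_3|$ after taking expectations under $D$. Applying it once on the target domain to peel off $h^{*}$ and once on the source domain to reassemble the individual errors yields, for any $h \in \mathcal{H}$, the chain
\begin{align}
\varepsilon_{\text{target}}(h) &\le \varepsilon_{\text{target}}(h^{*}) + \varepsilon_{\text{target}}(h, h^{*}) \nonumber \\
&\le \varepsilon_{\text{target}}(h^{*}) + \varepsilon_{\text{source}}(h, h^{*}) + \bigl| \varepsilon_{\text{target}}(h, h^{*}) - \varepsilon_{\text{source}}(h, h^{*}) \bigr| \nonumber \\
&\le \varepsilon_{\text{source}}(h) + \varepsilon_{\text{source}}(h^{*}) + \varepsilon_{\text{target}}(h^{*}) + \bigl| \varepsilon_{\text{target}}(h, h^{*}) - \varepsilon_{\text{source}}(h, h^{*}) \bigr|, \nonumber
\end{align}
where the middle line simply adds and subtracts the source disagreement $\varepsilon_{\text{source}}(h, h^{*})$, and the last line expands $\varepsilon_{\text{source}}(h, h^{*}) \le \varepsilon_{\text{source}}(h) + \varepsilon_{\text{source}}(h^{*})$ by one more triangle step.

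The third and decisive step --- and the one I expect to be the main obstacle --- is to bound the residual term $\bigl| \varepsilon_{\text{target}}(h, h^{*}) - \varepsilon_{\text{source}}(h, h^{*}) \bigr|$ by the $\mathcal{H}\Delta\mathcal{H}$ divergence. The key observation is that for binary $h, h' \in \mathcal{H}$ the disagreement $\varepsilon_D(h, h') = \Pr_{x \sim D}[h(x) \neq h'(x)]$ is the $D$-measure of the event $\{x : h(x) \neq h'(x)\}$, and such events are precisely the positive regions of the symmetric-difference class $\mathcal{H}\Delta\mathcal{H}$. Since both $h$ and $h^{*}$ belong to $\mathcal{H}$, the difference of their disagreement probabilities across the two domains is dominated by the supremum over all such regions, which is the divergence $d_{\mathcal{H}\Delta\mathcal{H}}(X_{\text{source}}, X_{\text{target}})$ up to its normalizing constant. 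Substituting this bound and identifying $\varepsilon_{\text{source}}(h^{*}) + \varepsilon_{\text{target}}(h^{*}) = C^{*}$ collects the three advertised terms. The only delicate point is the coefficient on the divergence: with the factor-of-two normalization adopted in the definition of $d_{\mathcal{H}\Delta\mathcal{H}}$ in Section~\ref{sec_inference}, the per-region gap is at most $\tfrac{1}{2} d_{\mathcal{H}\Delta\mathcal{H}}$, so I would either carry the $\tfrac{1}{2}$ explicitly or fold it into the convention to match the coefficient of $1$ written in the statement; the rest of the chain is a routine application of the triangle inequality.
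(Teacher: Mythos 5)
Your proposal is correct, and it is worth noting that the paper itself gives no proof of this statement---it is imported verbatim from \cite{ben2010theory} as the ingredient for Corollary~\ref{corollary_1} and Theorem~\ref{therorem_1}---so the only benchmark is the original source, whose canonical argument (the triangle-inequality chain through the ideal joint hypothesis $h^{*}$, the reassembly $\varepsilon_{\text{source}}(h,h^{*}) \le \varepsilon_{\text{source}}(h) + \varepsilon_{\text{source}}(h^{*})$ via the labeling function, the identification $C^{*} = \varepsilon_{\text{source}}(h^{*}) + \varepsilon_{\text{target}}(h^{*})$, and the domination of the cross-domain disagreement gap by the symmetric-difference divergence) you reproduce faithfully and completely. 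Your closing remark on normalization is also exactly right: under the factor-of-two definition of $d_{\mathcal{H}\Delta\mathcal{H}}$ adopted in Section~\ref{sec_inference}, the residual term $\bigl|\varepsilon_{\text{target}}(h,h^{*}) - \varepsilon_{\text{source}}(h,h^{*})\bigr|$ is bounded by $\tfrac{1}{2}\, d_{\mathcal{H}\Delta\mathcal{H}} \le d_{\mathcal{H}\Delta\mathcal{H}}$, so the coefficient of $1$ in the stated bound is valid, merely not tight.
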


\begin{corollary} \label{corollary_1}
For each time step $i$ in the support stream $S$ and query stream $Q$, let $\lambda_i = d_{\mathcal{H}\Delta\mathcal{H}}(H_{S_i}, H_{Q_i})$ be the domain discrepancy of the feature distributions $H_{S_i}$ and $H_{Q_i}$, the error of the query domain ${Q_{i}}$ is bounded by:
\begin{equation}
    \varepsilon_{Q_{i}} \le \varepsilon_{S_{i}} + \lambda_i + C_i^*,
\end{equation}
where $C_i^* = \min\limits_{\bm{\theta}} (\varepsilon_{S_i} + \varepsilon_{Q_i})$ is the error of an optimal solver for both the support domain $S_i$ and the query domain $Q_i$.
\end{corollary}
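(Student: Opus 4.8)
The plan is to obtain Corollary~\ref{corollary_1} as a direct instantiation of the general domain adaptation bound of~\cite{ben2010theory} stated just above it. The essential observation is that the cited theorem is agnostic to the representation over which the hypotheses are defined: it bounds the target error in terms of the source error, the $\mathcal{H}\Delta\mathcal{H}$ divergence between the two input distributions, and the ideal joint risk $C^*$, for \emph{any} space on which the hypothesis class lives. Since both our solver and the hypotheses $f, f'$ defining the divergence operate on the domain-agnostic feature $\bm{h}$ rather than on the raw input $\bm{x}$, the natural move is to apply the theorem with the marginal feature distributions $H_{S_i}$ and $H_{Q_i}$ in place of $X_{\text{source}}$ and $X_{\text{target}}$.

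Concretely, I would first identify the support domain $S_i$ with the source and the query domain $Q_i$ with the target, and note that composing the inference map $\Psi$ with the respective input distributions induces the feature marginals $H_{S_i}$, $H_{Q_i}$, while pushing the labeling function through $\Psi$ supplies the labeled distributions in feature space. I would then invoke the cited bound verbatim on these feature-space quantities, giving $\varepsilon_{Q_i} \le \varepsilon_{S_i} + d_{\mathcal{H}\Delta\mathcal{H}}(H_{S_i}, H_{Q_i}) + C_i^*$. Renaming $\lambda_i := d_{\mathcal{H}\Delta\mathcal{H}}(H_{S_i}, H_{Q_i})$ and taking $C_i^* = \min_{\bm{\theta}}(\varepsilon_{S_i} + \varepsilon_{Q_i})$ to be the ideal joint risk of the solver over both domains recovers the stated inequality.

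The only point demanding care---and hence the main obstacle---is ensuring that the hypothesis family and the error functional in the cited theorem are exactly those appearing in the corollary once we descend to feature space. That is, the $\mathcal{H}\Delta\mathcal{H}$ divergence must be computed with respect to the same hypothesis class $\mathcal{F}$ over $\bm{h}$ from which the solver is drawn, and the errors $\varepsilon_{S_i}, \varepsilon_{Q_i}$ must be measured against the labeling as seen through $\Psi$, so that $C_i^*$ is unambiguously the optimal-joint-solver error over $(H_{S_i}, H_{Q_i})$. I would make these identifications explicit up front; with the conventions pinned down, the substitution is immediate and no further estimation is needed. This per-step bound is then the building block that, summed over $i$ and augmented by the generative-replay approximation gap, yields the full Theorem~\ref{therorem_1}.
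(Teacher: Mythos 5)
Your proposal is correct and matches the paper's treatment: the paper presents Corollary~\ref{corollary_1} as an immediate instantiation of the cited bound of~\cite{ben2010theory}, with the feature marginals $H_{S_i}$, $H_{Q_i}$ substituted for the source and target input distributions and the solver playing the role of the hypothesis. Your additional care in pinning down that the hypothesis class, errors, and ideal joint risk all live on the feature space $\bm{h}$ (via the inference map $\Psi$) is exactly the implicit identification the paper relies on, so there is no substantive difference in approach.
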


With the setup introduced in Corollary~\ref{corollary_1}, we now prove Theorem~\ref{therorem_1}.

\setcounter{theorem}{0}
\begin{theorem} (Theorem~\ref{therorem_1} in Section~\ref{sec_theorem})
Let $\lambda_i$ be the domain discrepancy of the marginal feature distributions $H_{S_i}$ and $H_{Q_i}$ measured by the $\mathcal{H}\Delta\mathcal{H}$ divergence, \ie, $\lambda_i = d_{\mathcal{H}\Delta\mathcal{H}}(H_{S_i}, H_{Q_i})$, and $P_{r}^{(i)}$, $P_{g}^{(i)}$ respectively denote the conditional distributions of labels $y_{S_i}$ given the real features $H_{S_i}$ and generated features $\Tilde{H}_{S_i}$, the total error of the query stream $\varepsilon_Q$ at time step $t$ is bounded by:
\begin{equation}
\begin{split}
    \varepsilon_Q \le \sum_{i=1}^t (\varepsilon_{S_i} + \lambda_i) + \sum_{i=1}^{t-1} KL(P_g^{(i)} \vert\vert P_r^{(i)}) + C^*,
\end{split}
\end{equation}
where $C^* = \min\limits_{\bm{\theta}} \sum_{i=1}^t (\varepsilon_{S_i} + \varepsilon_{Q_i})$ is the error of an optimal solver for both the support and query streams.
\end{theorem}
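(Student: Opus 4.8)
The plan is to reduce the stream-level bound to the single-step Corollary~\ref{corollary_1} and then separately account for the error introduced by replaying generated features in place of real ones. First I would take the total query error to be the accumulated error over all query domains seen up to time $t$, \ie, $\varepsilon_Q = \sum_{i=1}^t \varepsilon_{Q_i}$, and apply Corollary~\ref{corollary_1} to each summand to get $\varepsilon_{Q_i} \le \varepsilon_{S_i} + \lambda_i + C_i^*$. Summing over $i$ immediately yields $\varepsilon_Q \le \sum_{i=1}^t(\varepsilon_{S_i} + \lambda_i) + \sum_{i=1}^t C_i^*$, so the first group of terms in the target inequality is already in place.

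Next I would collapse the per-step optimal errors $\sum_{i=1}^t C_i^*$ into the single joint term $C^*$. Since $C_i^* = \min_{\bm{\theta}}(\varepsilon_{S_i} + \varepsilon_{Q_i})$ optimizes each step independently while $C^* = \min_{\bm{\theta}}\sum_{i=1}^t(\varepsilon_{S_i} + \varepsilon_{Q_i})$ forces a single shared solver, evaluating the per-step minima at the joint minimizer $\bm{\theta}^*$ gives $\sum_{i=1}^t C_i^* \le \sum_{i=1}^t(\varepsilon_{S_i}(\bm{\theta}^*) + \varepsilon_{Q_i}(\bm{\theta}^*)) = C^*$. This replaces $\sum_i C_i^*$ by $C^*$ in the running bound.

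The remaining and most delicate step is to justify that the support errors $\varepsilon_{S_i}$ above are attainable even though, for every past task $i < t$, the solver is never trained on the real support features $H_{S_i}$ but only on the replayed features $\tilde{H}_{S_i}$. Here I would relate the solver's error against the real conditional label distribution $P_r^{(i)}$ to its error against the generated conditional distribution $P_g^{(i)}$ that it actually optimizes during replay. Choosing the log-loss so that the solver error is a cross-entropy, the change of measure from the generated to the real conditional distribution costs a divergence term, giving a bound of the form $\varepsilon_{S_i} \le \varepsilon_{S_i}^{\text{gen}} + KL(P_g^{(i)} \vert\vert P_r^{(i)})$ for each $i < t$, whereas the current task $i=t$ contributes no such term because it is trained on real features. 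Absorbing the generated-feature errors $\varepsilon_{S_i}^{\text{gen}}$ into the corresponding $\varepsilon_{S_i}$ and summing the corrections over $i = 1,\dots,t-1$ produces the term $\sum_{i=1}^{t-1} KL(P_g^{(i)} \vert\vert P_r^{(i)})$, completing the claimed bound.

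The main obstacle I anticipate is making this replay-correction step rigorous: pinning down the precise loss under which the real-versus-generated error gap is controlled by a single $KL(P_g^{(i)} \vert\vert P_r^{(i)})$ rather than its square root via Pinsker or the reversed direction $KL(P_r^{(i)} \vert\vert P_g^{(i)})$, and checking that this penalty enters additively alongside the domain-adaptation terms without double-counting the support error. By contrast, the decomposition $\varepsilon_Q = \sum_i \varepsilon_{Q_i}$ and the min-of-sum inequality are routine; the conceptual content of the theorem lives entirely in converting the distributional mismatch between replayed and real features into the clean KL term that distinguishes this continual bound from the classical single-step result.
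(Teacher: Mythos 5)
Your proposal is correct and follows essentially the same route as the paper's own proof: a per-domain decomposition of the query-stream error, the single-step bound of Corollary~\ref{corollary_1} applied at each time step with the per-step optimal errors collapsed into the joint $C^*$ (a min-of-sums inequality you make explicit but the paper leaves implicit), and a KL correction charged only to the past steps $i<t$ that are trained on replayed rather than real features. The delicate point you flag is genuine but is not resolved by the paper either: its proof simply asserts the identity $\hat{\varepsilon}_{S_i}=\varepsilon_{S_i}+KL(P_g^{(i)}\vert\vert P_r^{(i)})$ by identifying expected log-likelihoods with errors (under mismatched measures and sign conventions), so your change-of-measure inequality is at the same level of rigor as, if not slightly more careful than, the paper's own argument.
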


\begin{proof}
At time step $t$, the error of previous query domains $Q_i$ for $i \in [1, t-1]$ is estimated by $\hat{\varepsilon}_{Q_i}$, since we use $\Tilde{H}_{S_i}$ to approximate $H_{S_i}$ during the training of the solver. The total error of the query stream is:
\begin{equation}\label{epsilon_Q}
\begin{split}
    \varepsilon_Q &= \varepsilon_{Q_t} + \sum_{i=1}^{t-1} \hat{\varepsilon}_{Q_i},\\
    & \le \varepsilon_{S_t} + \lambda_t + \sum_{i=1}^{t-1} (\hat{\varepsilon}_{S_i} + \lambda_i) + C^*.\\
\end{split}
\end{equation}

On the other hand, for each support domain $S_i$, the $KL$ divergence between the real and generated conditional distributions of labels $y_{S_i}$ given the features, \ie, $KL(P_g^{(i)}(y|\bm{h}) \vert\vert P_r^{(i)}(y|\bm{h}))$, can be interpreted as:
\begin{equation}
\begin{split}
    KL(P_g^{(i)}(y|\bm{h}) \vert\vert P_r^{(i)}(y|\bm{h})) \\
    &\hspace{-75pt}= \sum_y P_g^{(i)}(y|\bm{h}) \log \frac{P_g^{(i)}(y|\bm{h})}{P_r^{(i)}(y|\bm{h})} \\
    &\hspace{-75pt}= \mathbb{E}[\log \frac{P_g^{(i)}(y|\bm{h})}{P_r^{(i)}(y|\bm{h})}] \\
    &\hspace{-75pt}= \mathbb{E}[\log P_g^{(i)}(y|\bm{h})] - \mathbb{E}[\log P_r^{(i)}(y|\bm{h})] \\
    &\hspace{-75pt}= \hat{\varepsilon}_{S_i} - \varepsilon_{S_i}.
\end{split}
\end{equation}
Therefore, the error for each support domain $S_i$ at the time step $t$ is estimated by:
\begin{equation}\label{epsilon_S}
\begin{split}
    \hat{\varepsilon}_{S_i} = \varepsilon_{S_i} + KL(P_g^{(i)} \vert\vert P_r^{(i)}).
\end{split}
\end{equation}

Combining \ref{epsilon_Q} and \ref{epsilon_S}, the total error of the query stream $\varepsilon_Q$ at time step $t$ is:
\begin{equation}
\begin{split}
    \varepsilon_Q &\le \varepsilon_{S_t} + \lambda_t + \sum_{i=1}^{t-1} (\hat{\varepsilon}_{S_i} + \lambda_i) + C^* \\
    &= \varepsilon_{S_t} + \lambda_t + \sum_{i=1}^{t-1} ({\varepsilon}_{S_i} + KL(P_g^{(i)} \vert\vert P_r^{(i)}) + \lambda_i) + C^* \\
    &= \sum_{i=1}^t (\varepsilon_{S_i} + \lambda_i) + \sum_{i=1}^{t-1} KL(P_g^{(i)} \vert\vert P_r^{(i)}) + C^*.
\end{split}
\end{equation}

\end{proof}

\subsection{Implementation details}
\subsubsection{Dataset task split}
The two benchmark datasets are split into multiple tasks based on the class labels to simulate the task drift in non-stationary environments. Office-31~\cite{saenko2010adapting} has three domains: Amazon (A), DSLR (D) and Webcam (W), which in total contains 31 classes and 4,652 images. We split the dataset into 5 tasks, with 6 classes in the first four tasks and 7 classes in the last task (split details in Table~\ref{supp_tab_office31_split}). Office-Home~\cite{venkateswara2017deep} is a more challenging dataset that contains 65 classes and 15,500 images in four distinct domains: Artistic images (Ar), Clip art (Cl), Product images (Pr) and Real-World images (Rw). Similarly, we split the dataset into 13 tasks, each with 5 classes (split details in Table~\ref{supp_tab_officehome_split}).

\begin{table*}[htbp]
    \centering
    \caption{Office-31 task split.}
      \begin{tabular}{r*{10}{c}}
        \toprule
        Task & Classes (number of classes) & Label range \\
        \midrule
        1 & back pack, bike, bike helmet, bookcase, bottle, calculator (6) & 0-5 \\
        2 & desk chair, desk lamp, desktop computer, file cabinet, headphones, keyboard (6) & 6-11 \\
        3 & laptop computer, letter tray, mobile phone, monitor, mouse, mug (6) & 12-17 \\
        4 & paper notebook, pen, phone, printer, projector, punchers (6) & 18-23 \\
        5 & ring binder, ruler, scissors, speaker, stapler, tape dispenser, trash can (7) & 24-30 \\
        \bottomrule
      \end{tabular}
    \label{supp_tab_office31_split}
\end{table*}

\begin{table*}[htbp]
    \centering
    \caption{Office-Home task split.}
      \begin{tabular}{r*{10}{c}}
        \toprule
        Task & Classes (number of classes) & Label range \\
        \midrule
        1 & Drill, Exit Sign, Bottle, Glasses, Computer (5) & 0-4 \\
        2 & File Cabinet, Shelf, Toys, Sink, Laptop (5) & 5-9 \\
        3 & Kettle, Folder, Keyboard, Flipflops, Pencil (5) & 10-14 \\
        4 & Bed, Hammer, Toothbrush, Couch, Bike (5) & 15-19 \\
        5 & Postit Notes, Mug, Webcam, Desk Lamp, Telephone (5) & 20-24 \\
        6 & Helmet, Mouse, Pen, Monitor, Mop (5) & 25-29 \\
        7 & Sneakers, Notebook, Backpack, Alarm Clock, Push Pin (5) & 30-34 \\
        8 & Paper Clip, Batteries, Radio, Fan, Ruler (5) & 35-39 \\
        9 & Pan, Screwdriver, Trash Can, Printer, Speaker (5) & 40-44 \\
        10 & Eraser, Bucket, Chair, Calendar, Calculator (5) & 45-49 \\
        11 & Flowers, Lamp Shade, Spoon, Candles, Clipboards (5) & 50-54 \\
        12 & Scissors, TV, Curtains, Fork, Soda (5) & 55-59 \\
        13 & Table, Knives, Oven, Refrigerator, Marker (5) & 60-64 \\
        \bottomrule
      \end{tabular}
      \vspace{4pt}
    \label{supp_tab_officehome_split}
\end{table*}

\subsubsection{Model architectures}
We adopt ResNet~\cite{he2016deep} models pretrained on ImageNet~\cite{russakovsky2015imagenet} as part of our encoder, \eg, ResNet-34 for the Office-31 dataset and ResNet-50 for the Office-Home dataset. The extracted features are used to infer the latent code with one additional linear layer. Our decoder consists of a linear layer, a ReLU layer and a Batch Norm layer. The dimension of the output domain-agnostic features is 1024 for Office-31 and 2048 for Office-Home. The solver module is a two-layer neural network with the width of 1024. The parameters of the ResNet model used in Scenario (2) is fine-tuned during training in order to learn better class representations, since more classes are involved within a single task, as compared to Scenario (1), where the extracted features directly from the ResNet model are sufficient enough as class representations.

We use margin disparity discrepancy (MDD) in our inference module for minimizing the domain disparity discrepancy, and also follow the same architecture choices as in~\cite{pmlr-v97-zhang19i} for the two classifiers $f$ and $f'$, \ie, two-layer neural networks.

\subsubsection{Optimization}
Two optimizers are used: Adam optimizer for the inference module with the learning rate 1e-4, and SGD optimizer for the two classifiers ($f$, $f'$) and the solver module, with nesterov momentum 0.9 and weight decay 5e-4. The initial learning rate of the SGD optimizer is set to 4e-4 for Scenario~(1) and 4e-3 for Scenario (2). 

We use the gradient reversal strategy~\cite{ganin2016domain} for the minmax optimization (Eq.~\ref{eq_beta}), and the training scheduler for the coefficient in the gradient reversal layer is defined by:
\begin{equation}
    {coeff} = 2.0 * \frac{0.3}{1.0 + \exp{(-\frac{i}{2000})}} - 0.3,
\end{equation}
where $i$ is the iteration step number. The Lagrange multiplier $\beta$ in Eq.~\ref{eq_beta} is set to 1. 

We train the three modules (the inference module, generative module and solver module) end-to-end with 1000 iteration steps for each task for Office-31, and 5000 iteration steps for Office-Home. We warmup the inference module for 500 iteration steps.

\begin{figure*}[!t]
	\includegraphics[width=\textwidth]{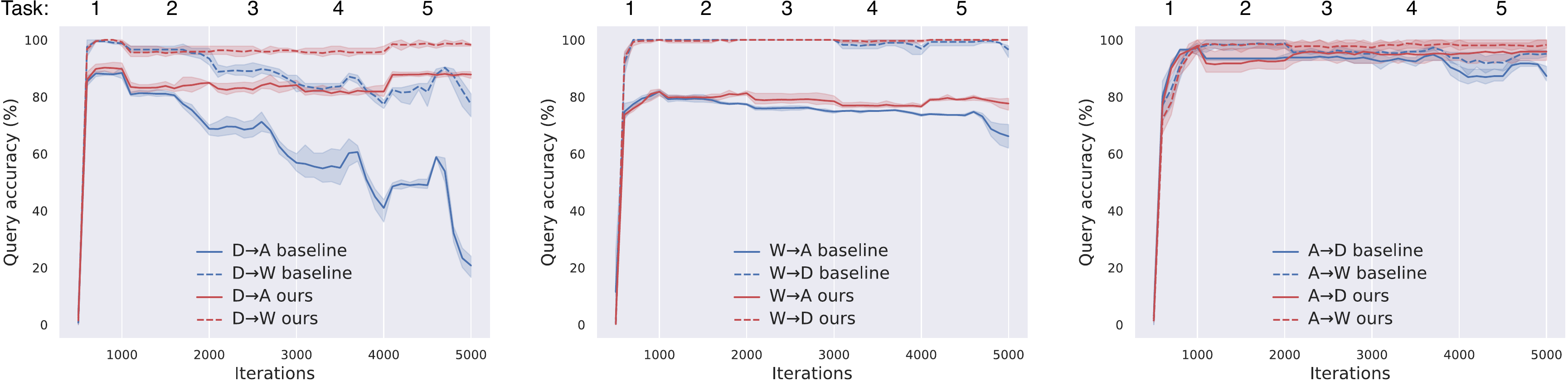}
	\caption{The query accuracy on the first task during sequential training. (Office-31)}
	\label{fig:task1_alldomains}
	\vspace{4pt}
\end{figure*}

\addtolength{\tabcolsep}{3pt}
\begin{table*}[htbp]
    \centering
    \caption{Average query accuracy ($\%$) of all learned tasks on Office-31.}
    \begin{adjustbox}{width=\textwidth}
      \begin{tabular}{c*{8}{c}}
        \toprule
        Method & Setting & A $\rightarrow$ W & D $\rightarrow$ W & W $\rightarrow$ D & A $\rightarrow$ D & D $\rightarrow$ A & W $\rightarrow$ A & \fcolorbox{gray}{lightgray}{Average} \\
        \midrule
        joint training* & \multirow{1}{*}{{i.i.d.}} & 94.5$\pm$0.3 & 98.4$\pm$0.1 & 100.0$\pm$.0 & 93.5$\pm$0.2 & 74.6$\pm$0.3 & 72.2$\pm$0.1 & \fcolorbox{gray}{lightgray}{88.9} \\
        \midrule
        upper bound (task split) & {non i.i.d.} & 83.2$\pm$2.7 & 96.2$\pm$0.5 & 96.3$\pm$2.7 & 84.5$\pm$1.4 & 70.5$\pm$1.0 & 67.3$\pm$2.6 & \fcolorbox{gray}{lightgray}{83.0} \\
        \hdashline
        naive baseline$\dagger$ & \multirow{3}{*}{{non i.i.d.}} & 10.2$\pm$2.3 & 17.3$\pm$1.1 & 17.7$\pm$4.2 & 9.1$\pm$2.5 & 10.5$\pm$0.8 & 11.4$\pm$3.0 & \fcolorbox{gray}{lightgray}{12.7} \\
        optimal baseline$\dagger$ & & 90.8$\pm$1.2 & 91.6$\pm$1.1 & \textbf{97.6$\pm$0.7} & \textbf{86.9$\pm$3.1} & 56.0$\pm$1.7 & 63.3$\pm$3.7 & \fcolorbox{gray}{lightgray}{81.0} \\
        ours & & \textbf{92.0$\pm$2.3} & \textbf{95.6$\pm$0.5} & 95.7$\pm$2.7 & 85.2$\pm$0.6 & \textbf{73.3$\pm$0.4} & \textbf{67.3$\pm$3.1} & \fcolorbox{gray}{lightgray}{\textbf{84.9}} \\
        \bottomrule
        \multicolumn{5}{l}{\footnotesize The results are provided with mean$\pm$std based on three independent experiments.} \\
        \multicolumn{5}{l}{\footnotesize * results cited from MDD~\cite{pmlr-v97-zhang19i}; $\dagger$ defined in Table~\ref{tab_ablation}.} \\
      \end{tabular}
    \end{adjustbox}
    \label{tab_average_31}
\end{table*}
\addtolength{\tabcolsep}{-3pt}

\subsection{Additional results}
\subsubsection{Additional results on Office-31 dataset}
Here, we show additional results that are referenced in Section~\ref{sec_task_drift} on the Office-31 dataset, illustrating the effectiveness of our proposed approach in addressing the task drift. 

Figure~\ref{fig:task1_alldomains} compares the query accuracy of the first task between the baseline and our proposed approach, as the training progresses across tasks (from task 1 to task 5). The proposed approach outperforms the baseline, where the solver is shown to consistently maintain the knowledge on how to solve the first task, regardless of the chosen domains (\eg, D$\rightarrow$A or D$\rightarrow$W). However, the improvement is the most evident in the D$\rightarrow$A setting (Figure~\ref{fig:task1_alldomains}, left).

Table~\ref{tab_average_31} shows the average query accuracy of all learned tasks on Office-31. On average, our approach gives better performance than multiple baselines, and is comparable to the upper bound.

\subsubsection{Example scenario derived from combining Scenario (1) and (2) }
Scenario (1) and (2) are the two most fundamental scenarios that build up most complex ConDA scenarios in real-life. Upon the success of addressing both scenarios in Section~\ref{sec_task_drift} and \ref{sec_domain_drift}, in this subsection, we further show an example scenario that is derived from combining Scenario~(1) and~(2). More specifically, we integrate the domain drift within streams from Scenario (2) into Scenario (1), therefore, the new scenario has both task drift and domain drift within streams, and domain drift across the streams. We show the setup details in Table~\ref{tab_scenario_example_setup}, where we make random combinations of the available tasks and domains.

Table~\ref{tab_scenario_example_31} (Office-31) and Table~\ref{tab_scenario_example_home} (Office-Home) show both the query accuracy of the first task and the average query accuracy of all learned tasks, evaluated on the new example scenario. We compare the proposed approach to both the optimal baseline (defined in Table~\ref{tab_ablation}) and the upper bound. It is noticed that although our proposed approach significantly improves the optimal baseline, there is still a margin between the proposed approach and the upper bound, suggesting further improvement could be explored. We leave the exploration for future work.

\begin{table}[!t]
    \centering
    \caption{Evaluation of the proposed approach on an example scenario by combining Scenario (1) and (2) on Office-31 dataset.} 
    \begin{adjustbox}{width=\columnwidth}
      \begin{tabular}{c*{8}{c}}
        \toprule
        \multirow{2}{*}{Method} & \multicolumn{2}{c}{Query accuracy ($\%$)} \\
        \cmidrule(lr){2-3}
        & The first task only & Average of all learned tasks \\
        \midrule
        upper bound & 88.8$\pm$1.9 & 82.5$\pm$2.4\\
        \hdashline
        optimal baseline$\dagger$ & 15.7$\pm$10.9 & 41.8$\pm$7.1 \\
        ours & \textbf{84.0$\pm$3.1} & \textbf{67.2$\pm$1.7} \\
        \bottomrule
        \multicolumn{3}{l}{\footnotesize The results are provided with mean$\pm$std based on three independent experiments.} \\
        \multicolumn{3}{l}{\footnotesize $\dagger$ defined in Table~\ref{tab_ablation}.} \\
      \end{tabular}
    \end{adjustbox}
    \label{tab_scenario_example_31}
\end{table}

\begin{table}[!t]
    \centering
    \caption{Evaluation of the proposed approach on an example scenario by combining Scenario (1) and (2) on Office-Home dataset.} 
    \begin{adjustbox}{width=\columnwidth}
      \begin{tabular}{c*{8}{c}}
        \toprule
        \multirow{2}{*}{Method} & \multicolumn{2}{c}{Query accuracy ($\%$)} \\
        \cmidrule(lr){2-3}
        & The first task only & Average of all learned tasks \\
        \midrule
        upper bound & 78.4$\pm$0.7 & 81.8$\pm$1.0\\
        \hdashline
        optimal baseline$\dagger$ & 2.7$\pm$0.2 & 15.4$\pm$2.9 \\
        ours & \textbf{74.5$\pm$6.6} & \textbf{61.3$\pm$2.0} \\
        \bottomrule
        \multicolumn{3}{l}{\footnotesize The results are provided with mean$\pm$std based on three independent experiments.} \\
        \multicolumn{3}{l}{\footnotesize $\dagger$ defined in Table~\ref{tab_ablation}.} \\
      \end{tabular}
    \end{adjustbox}
    \label{tab_scenario_example_home}
\end{table}

\begin{table*}[!t]
    \centering
    \caption{Setup of an example scenario derived from combining Scenario (1) and (2).}
      \begin{tabular}{c*{14}{c}}
        \toprule
        Dataset & & \multicolumn{13}{c}{Task \& Domain} \\
        \midrule
        & Task: & 1 & 2 & 3 & 4 & 5 \\
        \cmidrule(lr){2-7}
        \multirow{2}{*}{Office-31} & Support stream: & D & A & W & D & W \\
        & Target stream: & A & W & D & W & A\\
        \midrule
        & Task: & 1 & 2 & 3 & 4 & 5 & 6 & 7 & 8 & 9 & 10 & 11 & 12 & 13 \\
        \cmidrule(lr){2-15}
        \multirow{2}{*}{Office-Home} & Support stream: & Ar & Cl & Pr & Rw & Ar & Rw & Pr & Ar & Pr & Rw & Cl & Ar & Cl \\
        & Target stream: & Cl & Pr & Rw & Ar & Rw & Pr & Cl & Pr & Rw & Cl & Ar & Cl & Rw \\
        \bottomrule
      \end{tabular}
    \label{tab_scenario_example_setup}
\end{table*}

\begin{figure*}[!t]
	\includegraphics[width=\textwidth]{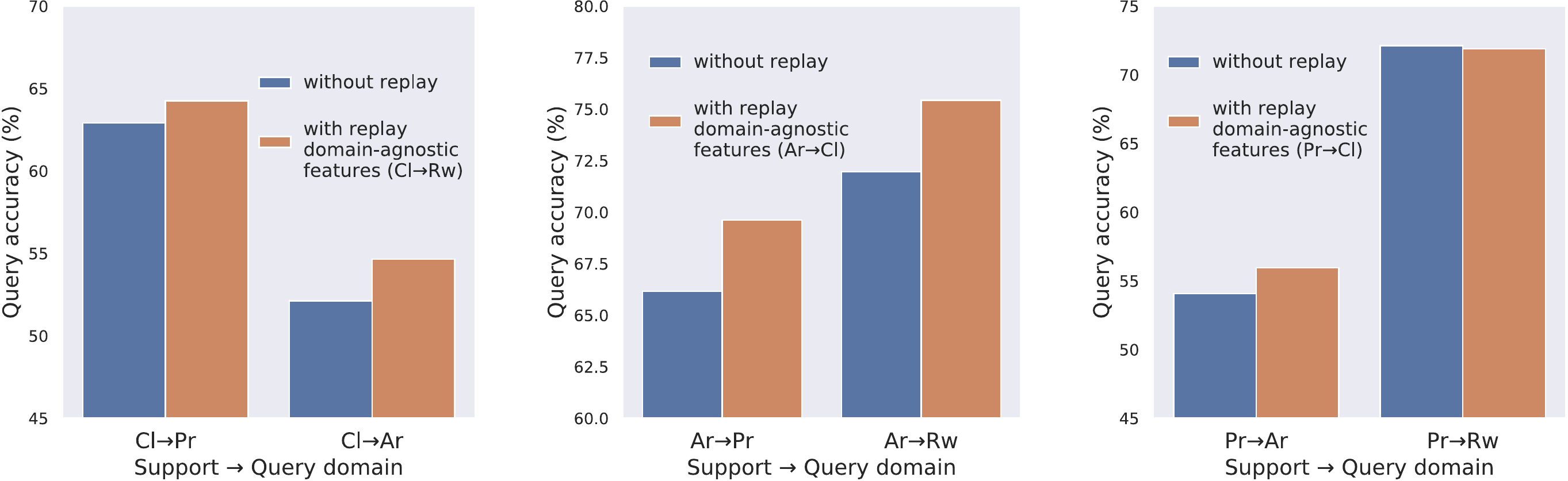}
	\caption{Additional examples of generative feature replay for data augmentation.}
	\label{fig:domains_aug_suppl}
\end{figure*}

\subsubsection{Additional results on generative feature replay for data augmentation}
In this subsection, we provide additional results that is referenced in Section~\ref{sec_exp_data_aug}, where we show the usage of generative feature replay as a data augmentation tool, in addition to addressing catastrophic forgetting. 

Figure~\ref{fig:domains_aug_suppl} shows more examples of generative replay of previously learned features benefiting the adaptation of the current query domain. For example, replaying the domain-agnostic features learned from Cl$\rightarrow$Rw improves both Cl$\rightarrow$Pr and Cl$\rightarrow$Ar (Figure~\ref{fig:domains_aug_suppl}, left), and replaying the features from Ar$\rightarrow$Cl also improves both Ar$\rightarrow$Pr and Ar$\rightarrow$Rw (Figure~\ref{fig:domains_aug_suppl}, middle). In some cases, however, no significant improvement is observed (\eg, from Pr$\rightarrow$Cl to Pr$\rightarrow$ Rw in Figure~\ref{fig:domains_aug_suppl}, right), for the same reason as described in Section~\ref{sec_exp_data_aug}.

\end{document}